\documentclass[final,12pt]{clear2022} % Include author names

% The following packages will be automatically loaded:
% amsmath, amssymb, natbib, graphicx, url, algorithm2e

\title[Relating Graph Neural Networks to Structural Causal Models]{Relating Graph Neural Networks to Structural Causal Models}
\usepackage{times}
% Use \Name{Author Name} to specify the name.
% If the surname contains spaces, enclose the surname
% in braces, e.g. \Name{John {Smith Jones}} similarly
% if the name has a "von" part, e.g \Name{Jane {de Winter}}.
% If the first letter in the forenames is a diacritic
% enclose the diacritic in braces, e.g. \Name{{\'E}louise Smith}

% Two authors with the same address
% \clearauthor{\Name{Author Name1} \Email{abc@sample.com}\and
%  \Name{Author Name2} \Email{xyz@sample.com}\\
%  \addr Address}

% Three or more authors with the same address:
% \clearauthor{\Name{Author Name1} \Email{an1@sample.com}\\
%  \Name{Author Name2} \Email{an2@sample.com}\\
%  \Name{Author Name3} \Email{an3@sample.com}\\
%  \addr Address}
\usepackage{algorithm}
\usepackage{algorithmic}        % colors
\usepackage{graphicx}
\usepackage{adjustbox}
\usepackage{amssymb}
\usepackage{afterpage}
\usepackage{makecell}
\usepackage{blkarray}
\usepackage[shortlabels]{enumitem}
\newtheorem{defin}{Definition}
\newtheorem{prop}{Proposition}
\newtheorem{corollary}{Corollary}
\DeclareMathOperator{\doop}{\textit{do}}
\DeclareMathOperator{\pa}{pa}

\DeclareMathOperator{\tran}{^{\mkern-1.5mu\mathsf{T}}}

%\usepackage{authblk}

% Authors with different addresses:
\clearauthor{%
 \Name{Matej Zečević} \Email{matej.zecevic@cs.tu-darmstadt.de}\\
 \addr Computer Science Department, TU Darmstadt
 \AND
 \Name{Devendra Singh Dhami} \Email{devendra.dhami@cs.tu-darmstadt.de}\\
 \addr Computer Science Department, TU Darmstadt
 \AND
 \Name{Petar Veličković} \Email{petarv@google.com}\\
 \addr DeepMind, London
 \AND
 \Name{Kristian Kersting} \Email{kersting@cs.tu-darmstadt.de}\\
 \addr Computer Science Department, Centre for Cognitive Science, TU Darmstadt, \\and Hessian Center for AI (hessian.AI)
}

%\author{\textbf{Matej Zečević}\textsuperscript{\rm 1} \quad \textbf{Devendra Singh Dhami}\textsuperscript{\rm 1} \quad \textbf{Petar Veličković}\textsuperscript{\rm 2} \quad \textbf{Kristian Kersting}\textsuperscript{\rm 1,3}}
%\affil{\textsuperscript{\rm 1}Computer Science Department, TU Darmstadt \\
%\textsuperscript{\rm 2}DeepMind \\
%\textsuperscript{\rm 3}Centre for Cognitive Science, TU Darmstadt, and Hessian Center for AI (hessian.AI)\\\texttt{\{matej.zecevic, devendra.dhami, kersting\}@cs.tu-darmstadt.de}\\ \texttt{petarv@google.com}}

\begin{document}

\maketitle

\vspace{-1.25cm}
\begin{abstract}%
    Causality can be described in terms of a structural causal model (SCM) that carries information on the variables of interest and their mechanistic relations. For most processes of interest the underlying SCM will only be partially observable, thus causal inference tries leveraging the exposed. Graph neural networks (GNN) as universal approximators on structured input pose a viable candidate for causal learning, suggesting a tighter integration with SCM. To this effect we present a theoretical analysis from first principles that establishes a more general view on neural-causal models, revealing several novel connections between GNN and SCM. We establish a new model class for GNN-based causal inference that is necessary and sufficient for causal effect identification. Our empirical illustration on simulations and standard benchmarks validate our theoretical proofs.
\end{abstract}

\begin{keywords}%
  Structural Causal Models, Neural Causal Models, Graph Neural Networks
\end{keywords}

\setlength{\parskip}{1em}
\vspace{-.5cm}
\section{Introduction}
Understanding causal interactions is central to human cognition and thereby of high value to science, engineering, business, and law \citep{penn2007causal}. Developmental psychology has shown how children explore similar to the manner of scientist, all by asking "What if?" and "Why?" type of questions \citep{gopnik2012scientific, buchsbaum2012power, pearl2018book}, while artificial intelligence research dreams of automating the scientist's manner \citep{mccarthy1998artificial, mccarthy1981some, steinruecken2019automatic}. Deep learning has brought optimizable universality in approximation which refers to the fact that for any function there will exist a neural network that is close in approximation to arbitrary precision \citep{cybenko1989approximation, hornik1991approximation}. This capability has been corroborated by tremendous success in various applications \citep{krizhevsky2012imagenet, mnih2013playing,vaswani2017attention}. Thereby, combining causality with deep learning is of critical importance for research on the verge to a human-level intelligence. Preliminary attempts on a tight integration for so-called neural-causal models \citep{xia2021causal, pawlowski2020deep} exist and show to be promising towards the dream of a system that performs causal inferences at the same scale of effectiveness as modern-day neural modules in their most impressive applications.

\noindent While causality has been thoroughly formalized within the last decade \citep{pearl2009causality,peters2017elements}, deep learning on the other hand saw its success in practical applications with theoretical breakthroughs remaining in the few. \citet{bronstein2017geometric} pioneer the notion of geometric deep learning and an important class of neural networks that follows from the geometric viewpoint and generalize to modern architectures is the graph neural network (GNN) \citep{velivckovic2017graph,kipf2016semi,gilmer2017neural}. Similar to other specialized neural networks, the GNN has resulted in state-of-the-art performance in specialized applications like drug discovery \citep{stokes2020deep} and more recently on ETA prediction in google maps \citep{derrow2021eta}. These specialities, to which we refer to as inductive biases, can leverage otherwise provably impossible inferences \citep{gondal2019transfer}. As the name suggests, the GNN places an inductive bias on the structure of the input i.e., the input's dimensions are related such that they form a graph structure. To link back to causality, at its core lies a Structural Causal Model (SCM) which is considered to be the model of reality responsible for data-generation. The SCM implies a graph structure over its modelled variables, and since GNN work on graphs, a closer inspection on the relation between the two models seems reasonable towards progressing research in neural-causal AI. Instead of taking inspiration from causality's principles for improving machine learning \citep{mitrovic2020representation}, we instead show how GNN can be used to perform causal computations i.e., how causality can emerge within neural models. To be more precise on the term causal inference: we refer to the modelling of Pearl's Causal Hierarchy (PCH) \citep{bareinboim20201on}. That is, we are given partial knowledge on the SCM in the form of e.g.\ the (partial) causal graph and/or data from the different PCH-levels.

\noindent Overall, we make a number of key contributions:\ (1) We derive, from first principles, a theoretical connection between GNN and SCM; (2) We define a more fine-grained NCM; (3) We formalize interventions for GNN and by this establish a new neural-causal model class that makes use of auto-encoders; (4) We provide theoretical results and proofs on the feasibility, expressivity, and identifiability of this new model class while relating to existing work (5) We empirically examine our theoretical model for practical causal inference on identification and estimation tasks. We make our code publicly available: \url{https://anonymous.4open.science/r/Relating-Graph-Neural-Networks-to-Structural-Causal-Models-A8EE}.

\section{Background and Related Work}
Before presenting our main theoretical findings, we briefly review the background on variational methods for generative modelling, on graph neural networks as non-parametric function approximator that leverage structural information, and conclusively on causal inference through the process of intervention/mutilation.

%\paragraph
\noindent{\bf Notation.} We denote indices by lower-case letters, functions by the general form $g(\cdot)$, scalars or random variables interchangeably by upper-case letters, vectors, matrices and tensors with different boldface font $\mathbf{v}, \mathbf{V}, \mathbf{\mathsf{V}}$ respectively, and probabilities of a set of random variables $\mathbf{X}$ as $p(\mathbf{X})$. Pearl's Causal Hierarchy (PCH) is denoted with $\mathcal{L}_i, i\in\{1,2,3\}$, and an intervention via the $\doop$-operator.

%\paragraph
\noindent{\bf Variational Inference.} Similar to the notions of disentanglement and causality, latent variable models propose the existence of apriori unknown variables $\mathbf{Z}$ to jointly model the phenomenon of interest with observed data, $p(\mathbf{X}, \mathbf{Z})$. The Variational Inference (VI) technique makes use of optimization, as an alternative to Markov chain Monte Carlo sampling (MCMC) approaches, for overcoming the curse of dimensionality\footnote{Uniformly covering a unit hypercube of $n$ dimensions with $k$ samples scales exponentially, $O(k^n)$.} when estimating probability distributions \citep{jordan1999introduction,blei2017variational}. In this Bayesian setting, the inference problem amounts to estimating the latent variable conditional $p(\mathbf{Z}\mid \mathbf{X})$ through the closest density of a pre-specified family $\mathcal{Q}$, that is, 
\begin{equation}\label{eq:vi}
    q^*(Z) = \arg\min_{q\in\mathcal{Q}} \text{KL}(q(\mathbf{Z})\mid\mid p(\mathbf{Z}\mid \mathbf{X}))
\end{equation} where the distance measure is set to be the Kullback-Leibler divergence. Inspecting Bayes Rule exposes that $p(\mathbf{Z}\mid \mathbf{X})=\frac{p(\mathbf{X}, \mathbf{Z})}{p(\mathbf{X})}$ where the evidence in the denominator is an exponential term in $\mathbf{Z}$, that is $p(\mathbf{X})=\int p(\mathbf{X}, \mathbf{Z}) \,d\mathbf{Z}$, thus rendering the overall problem described in Eq.\ref{eq:vi} intractable in the average case. Originally derived using Jensen's inequality \citep{jordan1999introduction}, a tractable lower bound on the evidence is revealed,
\begin{align}\label{eq:elbo}
\begin{split}
    \log p(\mathbf{X}) - \text{KL}(q(\mathbf{Z})\mid\mid p(\mathbf{Z}\mid \mathbf{X})) =\\ \mathbb{E}_q[\log p(\mathbf{X}\mid \mathbf{Z})] - \text{KL}(q(\mathbf{Z})\mid\mid p(\mathbf{Z}))
\end{split}
\end{align} where the first term expresses likelihood (or reconstruction) of the data under the given parameters while the divergence terms counteracts such parameterization to adjust for the assumed prior. Choosing $p_{\pmb{\phi}}(\mathbf{X}\mid\mathbf{Z})$ and $q(\mathbf{Z}){:=}q_{\pmb{\theta}}(\mathbf{Z}\mid \mathbf{X})$ to be parameterized as neural networks leads to the variational auto-encoder (VAE) model class \citep{kingma2019introduction}. Importance sampling \citep{rubinstein2016simulation} reveals a connection between variational methods (VAE) and sampling techniques for performing marginal inference i.e., since
\begin{equation}\label{eq:impsamp}
    p(\mathbf{X})\approx \frac{1}{n} \sum_{i=1}^n \frac{p_{\pmb{\phi}}(\mathbf{X}\mid\mathbf{z}_i)p(\mathbf{z}_i)}{q_{\pmb{\theta}}(\mathbf{z}_i\mid \mathbf{X})}
\end{equation} where the number of samples $n$ is being kept moderate through the likelihood ratio induced by $q$.

%\paragraph
\noindent{\bf Graph Neural Networks.} In geometric deep learning, as portrayed by \citep{bronstein2021geometric}, graph neural networks (GNN) constitute a fundamental class of function approximator that place an inductive bias on the structural relations of the input. A GNN layer $f(\mathbf{D},\mathbf{A}_G)$ over some data considered to be vector-valued samples of our variables $\{\mathbf{d}_i\}^n_{i=1}\mathbf{D}\in R^{d\times n}$ and an adjacency representation $\mathbf{A}_G \in [0,1]^{d\times d}$ of a graph $G$ is generally considered to be a permutation equivariant\footnote{That is, for some permutation matrix $\mathbf{P}\in [0,1]^{d\times d}$, it holds that $f(\mathbf{P}\mathbf{D},\mathbf{P}\mathbf{A}_G\mathbf{P}\tran) = \mathbf{P}f(\mathbf{D},\mathbf{A}_G)$.} application of permutation invariant functions $\phi(\mathbf{d}_X, \mathbf{D}_{\mathcal{N}^G_X})$ on each of the variables (features) $\mathbf{d}_i$ and their respective neighborhoods within the graph $\mathcal{N}^G_i$. The most general form of a GNN layer is specified by 
\begin{equation}\label{eq:gnn}
    \mathbf{h}_i = \phi\bigg(\mathbf{d}_i, \bigoplus_{j\in\mathcal{N}^G_i} \psi(\mathbf{d}_i, \mathbf{d}_j)\bigg),
\end{equation}
where $\mathbf{h}_i$ represents the updated information of node $i$ aggregated ($\bigoplus$) over its neighborhood in the form of messages $\psi$. The flavour of GNN presented in Eq.\ref{eq:gnn} is being referred to as message-passing \citep{gilmer2017neural} and constitutes the most general class of GNN that supersets both convolutional \citep{kipf2016semi} and attentional \citep{velivckovic2017graph} flavours of GNN. In the context of representation learning on graphs, GCN were previously used within a VAE pipeline as means of parameterization to the latent variable posterior $p(\mathbf{Z}\mid \mathbf{X})$ \citep{kipf2016variational}.

%\paragraph
\noindent{\bf Causal Inference.} A (Markovian) Structural Causal Model (SCM) as defined by \cite{pearl2009causality,peters2017elements} is specified as $\mathfrak{C}:=(\mathbf{S},P(\mathbf{U}))$ where $P(\mathbf{U})$ is a product distribution over exogenous unmodelled variables and $\mathbf{S}$ is defined to be a set of $d$ structural equations
\begin{align}\label{eq:scm}
V_i := f_i(\pa(V_i),U_i), \quad \text{where} \ i=1,\ldots,d
\end{align}
with $\pa(V_i)$ representing the parents of variable $V_i$ in graph $G(\mathfrak{C})$. An intervention $\doop(\mathbf{W}), \mathbf{W} {\subset} \mathbf{V}$ on a SCM $\mathfrak{C}$ as defined in (\ref{eq:scm}) occurs when (multiple) structural equations are being replaced through new non-parametric functions $g_{\mathbf{W}}$ thus effectively creating an alternate SCM $\mathfrak{C}_2:=\mathfrak{C}^{\doop(\mathbf{W}=g_{\mathbf{W}})}$. Interventions are referred to as \textit{imperfect} if the parental relation is kept intact, $g_i(\pa_i,\cdot)$, and as \textit{atomic} if $g_i = a$ for $a \in \mathbb{R}$. An important property of interventions often referred to as "modularity" or "autonomy"\footnote{See Section 6.6 in \citep{peters2017elements}.} states that interventions are fundamentally of local nature, formally
\begin{align} \label{eq:autonomy}
p^{\mathfrak{C}_1}(V_i \mid \pa(V_i)) = p^{\mathfrak{C}_2}(V_i \mid \pa(V_i))\;,
\end{align}
where the intervention of $\mathfrak{C}_2$ occurred on variable $V_j$ opposed to $V_i$. This suggests that mechanisms remain invariant to changes in other mechanisms which implies that only information about the effective changes induced by the intervention need to be compensated for. An important consequence of autonomy is the truncated factorization
\begin{align} \label{eq:truncatedfactorization}
p(\mathbf{V}) = \prod\nolimits_{V\notin \mathbf{W}} p(V\mid \pa(V)))
\end{align}
derived by \cite{pearl2009causality}, which suggests that an intervention $\doop(\mathbf{W})$ introduces an independence of a set of intervened nodes $\mathbf{W}$ to its causal parents. 
Another important assumption in causality is that causal mechanisms do not change through intervention suggesting a notion of invariance to the cause-effect relations of variables which further implies an invariance to the origin of the mechanism i.e., whether it occurs naturally or through means of intervention \citep{pearl2016causal}. A SCM $\mathfrak{C}$ is capable of emitting various mathematical objects such as graph structure, statistical and causal quantities placing it at the heart of causal inference, rendering it applicable to machine learning applications in marketing \citep{hair2021data}), healthcare \citep{bica2020time}) and education \citep{hoiles2016bounded}. A SCM induces a causal graph $G$, an observational/associational distribution $p^{\mathfrak{C}}$, can be intervened upon using the $\doop$-operator and thus generate interventional distributions $p^{\mathfrak{C};\doop(...)}$ and given some observations $\mathbf{v}$ can also be queried for interventions within a system with fixed noise terms amounting to counterfactual distributions $p^{\mathfrak{C\mid \mathbf{V}=\mathbf{v}};\doop(...)}$. As suggested by the Causal Hierarchy Theorem (CHT) \citep{bareinboim20201on}, these properties of an SCM almost always form the Pearl Causal Hierarchy (PCH) consisting of different levels of distributions being $\mathcal{L}_1$ associational, $\mathcal{L}_2$ interventional and $\mathcal{L}_3$ counterfactual. This hierarchy suggests that causal quantities ($\mathcal{L}_i,i\in\{2,3\}$) are in fact richer in information than statistical quantities ($\mathcal{L}_1$), and the necessity of causal information (e.g.\ structural knowledge) for inference based on lower rungs e.g.\ $\mathcal{L}_1 \not\rightarrow \mathcal{L}_2$. Finally, to query for samples of a given SCM, the structural equations are being simulated sequentially following the underlying causal structure starting from independent, exogenous variables $U_i$ and then moving along the causal hierarchy of endogenous variables $\mathbf{V}$. To conclude, consider the formal definition of valuations for the first two layers being
\begin{equation} \label{eq:l12val}
    p^{\mathfrak{C}}(\mathbf{y}\mid \doop(\mathbf{x})) = \sum_{\{\mathbf{u}\mid \mathbf{Y}_{\mathbf{x}}(\mathbf{u})=\mathbf{y}\}} p(\mathbf{u})
\end{equation} for instantiations $\mathbf{x},\mathbf{y}$ of the node sets $\mathbf{X},\mathbf{Y} \subseteq \mathbf{V}$ where $\mathbf{Y}_{\mathbf{x}}: \mathbf{U}\mapsto \mathbf{Y}$ denotes the value of $\mathbf{Y}$ under intervention ${\mathbf{x}}$.

\setlength{\parskip}{0em}
\section{The GNN-SCM-NCM Connection}
To expand further on the boundaries of the integration between causality and machine learning, we perform a theoretical investigation on the relation between graph neural networks (GNN) and structural causal models (SCM), thereby transitively also to neural causal models (NCM). While all the established results on causal identification have proven that intervention/manipulation is not necessary for performing causal inference, the concept of intervention/manipulation still lies at the core of causality as suggested by the long-standing motto of Peter Holland and Don Rubin \textit{'No causation without manipulation'} \citep{holland1986statistics}. The centrality of interventions is why we choose to consider them as a starting point of our theoretical investigation. To this effect, we first define a process of intervention within the GNN computation layer that will subsequently reveal sensible properties of the process akin to those of intervention on SCM. 
\begin{defin}\label{def:iGNN}
\textbf{(Interventions within GNN.)} An intervention $\mathbf{x}$ on the corresponding set of variables $\mathbf{X}\subseteq \mathbf{V}$ within a GNN layer $f(\mathbf{D},\mathbf{A}_G)$, denoted by $f(\mathbf{D},\mathbf{A}_G{\mid} \doop(\mathbf{X}=\mathbf{x}))$, is defined as a modified layer computation, \begin{equation}\label{eq:iGNNrule}
    \mathbf{h}_i = \phi\bigg(\mathbf{d}_i, \bigoplus_{j\in\mathcal{M}^G_i} \psi(\mathbf{d}_i, \mathbf{d}_j)\bigg),
\end{equation} where the intervened local neighborhood is given by
\begin{equation}
    \mathcal{M}^G_i = \{j \mid j \in \mathcal{N}^G_i, j\not\in \pa_i {\iff} i\in\mathbf{X} \}
\end{equation} where $\mathcal{N}^G$ denotes the regular graph neighborhood. Such GNN-layers are said to be \textit{interventional}.
%\mz{\textbf{Interventions within GNN.} Define the $\doop$-operation on GNN.}
\end{defin}
An intervention, just like in an SCM, is of local nature i.e., the new neighborhood of a given node is a subset of the original neighborhood at any time, $\mathcal{M}\subseteq \mathcal{N}$. The notion of intervention belongs to the causal layers of the PCH i.e., layers 2 (interventional) and 3 (counterfactual). Fig.\ref{fig:iGNN} presents an intuitive illustration each for both the underlying SCM with its various properties and the intervention process within the GNN layer.
\begin{figure*}[t]
\centering
\includegraphics[width=0.9\textwidth]{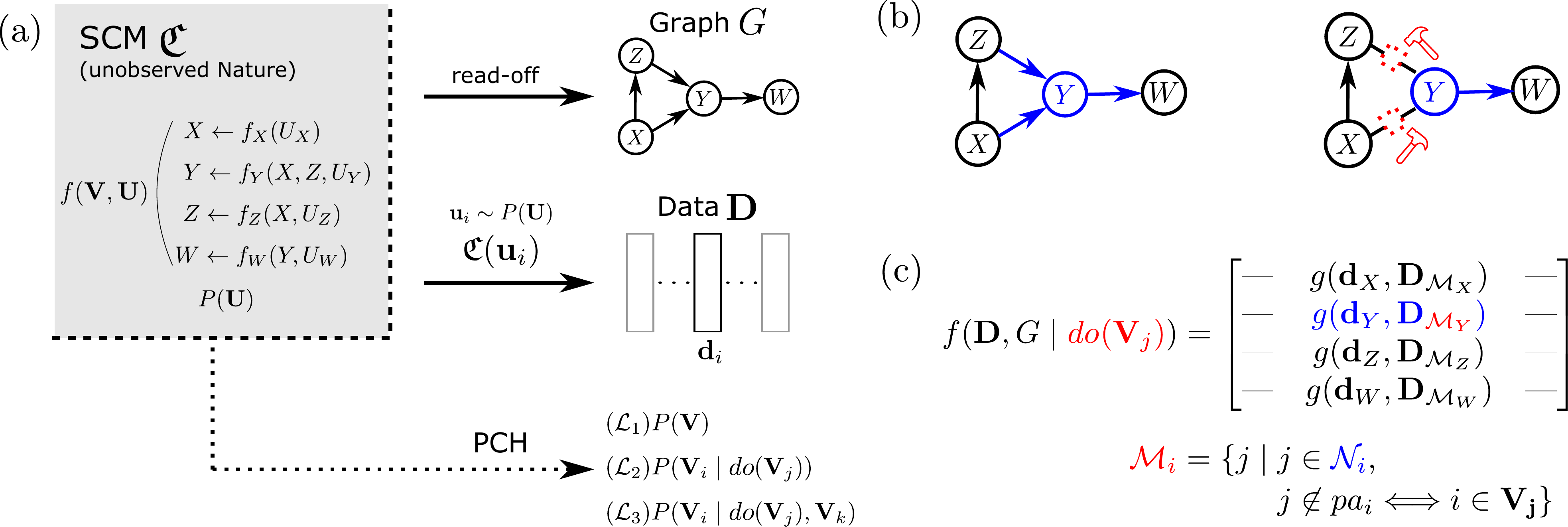}
\caption{\textbf{Graph Neural Networks and Interventions akin to SCM.} A schematic overview. (a) shows the unobserved SCM $\mathfrak{C}$ that generates data $\mathbf{D}$ through instantiations of the exogenous variables $\mathbf{U}$. The SCM implies a graph structure $G$ and the PCH ($\mathcal{L}_i$) (b) shows the intuition behind interventions using the $\doop$-operation (c) presents the mathematical formalism (Def.\ref{def:iGNN}) highlighting the intervention (red) and the regular neighborhood (blue). (Best viewed in color.)}
\label{fig:iGNN}
\end{figure*}
The motivational origin of this work lies in the tighter integration of causality with today's machine learning methodologies, more specifically neural network variants. We envision a fully-differentiable system that combines the benefits of both worlds. As a step towards this goal, we introduce the concept of intervention for GNNs (Def.\ref{def:iGNN}). The reader might wonder why counterfactuals ($\mathcal{L}_3$) are not being covered in this work. The reason for this lies in the fact that a conversion between GNN and SCM will necessarily have to cope with transforming a shared, global function $\psi$ into the collection of all local partial mechanisms $f_{ij}$ of any structural equation. Thereby, optimization becomes tremendously difficult. More formally, we state the following theorem on the model conversion.
\begin{theorem}\label{thm:gnnscm}
\textbf{(GNN-SCM Conversion.)} Consider the most general formulation of a message-passing GNN node computation $\mathbf{h}_i{:} \mathcal{F}\mapsto \mathcal{F}^{\prime}$ as in Eq.\ref{eq:gnn}. For any SCM $\mathfrak{C}{=}(\mathbf{S},P(\mathbf{U}))$ there exists always a choice of feature spaces $\mathcal{F},\mathcal{F}^{\prime}$ and shared functions $\phi,\psi$, such that for all structural equations $f{\in}\mathbf{S}$ it holds that $\mathbf{h}_i = f_i$.
\end{theorem}
\begin{proof}
Compact (details in Appendix). Let $f_i(\pa(i),U_i)=f_{i}(U_i, \mathcal{A}_i)+\sum_{j\in\pa(i)} f_{ij}(V_j)$ be a structural equation ($\mathcal{A}_i\in 2^{|\pa_i|},f_i\in\mathbf{S}, \mathfrak{C}{=}(\mathbf{S},P(\mathbf{U}))$) and its scalar-decomposition following Thm.1 in \citep{kuo2010decompositions}. The following mapping:
\begin{align}
\mathcal{F} = \mathbf{V}\cup\mathbf{U} = \mathcal{F}^{\prime}, \quad 
\phi(i,\dots) = f_{U_X}(U_X,\mathcal{A}_i)+ \sum \dots \quad
\psi(i,j) = f_{ij}
\end{align} where ($\dots$) is the remainder of the GNN-computation (Eq.\ref{eq:gnn}), defines a general construction scheme:
\begin{align}
\begin{split}
\mathbf{h}_i &= \phi\bigg(\mathbf{d}_i, \textstyle\bigoplus_{j\in\mathcal{N}^G_i} \psi(\mathbf{d}_i, \mathbf{d}_j)\bigg) = f_{U_i}(U_i,\mathcal{A}_i) + \sum_{j\in\pa(i)} f_{ij}(V_j) = f_i.
\end{split}
\end{align}
\end{proof}
The common ground between SCM and GNN lies within the assumed graph structure and thus is deemed suitable as a starting point for a reparameterization from SCM to GNN as Thm.\ref{thm:gnnscm} suggests. However, while Thm.\ref{thm:gnnscm} is powerful in the sense that any GNN can be seen as a neural SCM variant, the theorem does not give away any information on optimization. It follows naturally that $\psi$ is a \textit{shared} function amongst all nodes of the graph while an SCM considers a specific mechanism \textit{for each} of the nodes in the graph, and thus optimization becomes difficult. In a nutshell, the messages $\psi(i,j)$ need to model each of the dependency terms $f_{ij}$ within a structural equation, such that the messages themselves become a descriptor of the causal relation for $(i\leftarrow j)$. Nonetheless, the theoretical connection's existence suggests tighter integration for NCM with an important consequence being the connection to the base-NCM definition (see \citet{xia2021causal}).
\begin{corollary}\label{cor:ncmgnn}
\textbf{(NCM-Type 2.)} Allowing for the violation of sharedness of $\psi$ as depicted in Thm.\ref{thm:gnnscm} and choosing $\mathcal{F}=\mathcal{F}^{\prime}=\mathbf{U}\cup\mathbf{V}$ to be the union over endo- and exogenous variables, $\phi(i,\dots)=f_{U_i}(U_i,\mathcal{A}_i)+\sum(\dots)$ to be a sum-aggregator with noise term selection with $\mathcal{A}_i{\in} 2^{|\pa_i|}$, and $\psi=\{f^{ij}_{\pmb{\theta}}\}$ to be the dependency terms of the structural equations $f_i$ modelled as feedforward neural networks. Then the computation layer $\{\mathbf{h}_i\}^{|V|}_i$ is a special case of the NCM as in \citep{xia2021causal}.
\end{corollary}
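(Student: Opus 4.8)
The plan is to show that, under the stated parameter choices, the node-update $\{\mathbf{h}_i\}_{i=1}^{|\mathbf{V}|}$ instantiates exactly the template of a base-NCM, so that membership in the NCM class follows by definitional matching rather than by a fresh construction. First I would recall the base-NCM of \citet{xia2021causal}: a tuple in which the endogenous set is $\mathbf{V}$, the exogenous variables $\mathbf{U}$ carry a product prior, and each structural equation is realized as a feedforward network $\hat f_i(\pa(V_i),U_i;\pmb{\theta}_i)$. The target is therefore to read off, from Eq.\ref{eq:gnn} with the prescribed $\phi,\psi,\mathcal{F}$, an object of precisely this shape.

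Next I would substitute the choices directly into the message-passing layer. Taking $\mathcal{F}=\mathcal{F}^{\prime}=\mathbf{U}\cup\mathbf{V}$ lets each layer carry both the sampled noise and the endogenous values as features; taking $\phi$ to be the sum-aggregator with noise selection $\mathcal{A}_i$ and $\psi=\{f^{ij}_{\pmb{\theta}}\}$ collapses Eq.\ref{eq:gnn} to the additive form
\begin{equation*}
\mathbf{h}_i = f_{U_i}(U_i,\mathcal{A}_i) + \sum\nolimits_{j\in\pa(i)} f^{ij}_{\pmb{\theta}}(V_j),
\end{equation*}
exactly as in the construction underlying Thm.\ref{thm:gnnscm}, except that sharedness of $\psi$ across nodes is now dropped so that each pair $(i,j)$ owns its own parameters.

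The core step is to recognize this additive combination as a single feedforward map on $(\pa(V_i),U_i)$. Since a finite sum of feedforward networks is again a feedforward network --- concatenate the per-parent subnetworks and realize the summation in a final linear layer --- the right-hand side equals some $\hat f_i(\pa(V_i),U_i;\pmb{\theta}_i)$, which is the NCM structural-equation form. Matching the remaining components (the product prior over $\mathbf{U}$ against $P(\mathbf{U})$, with a reparameterization argument if one insists on the uniform exogenous law of the base-NCM, and the noise-selection $\mathcal{A}_i$ against the exogenous-to-equation assignment in the Markovian case) completes the identification of $\{\mathbf{h}_i\}$ with a base-NCM instance; the ``special case'' qualifier is then justified because additivity and the pairwise-message restriction make the realized function class a strict subfamily of the unrestricted $\hat f_i$.

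The step I expect to require the most care is this last matching, specifically the treatment of the noise-selection operator $\mathcal{A}_i$ and the feature embedding $\mathcal{F}=\mathbf{U}\cup\mathbf{V}$: one must check that routing the exogenous variables through the GNN features and selecting them via $\mathcal{A}_i$ reproduces the exact exogenous-variable-to-equation correspondence demanded by the NCM definition, and that no endogenous value is inadvertently re-used as noise. The expressiveness containment itself (sum-of-networks is a network) is routine; the potential obstacle is purely the bookkeeping of how $\mathbf{U}$ and $\mathbf{V}$ coexist as features without violating the ordering assumptions that let the NCM be evaluated node by node.
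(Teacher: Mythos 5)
Your proposal is correct and follows essentially the same route as the paper's own proof: substitute the prescribed $\phi,\psi,\mathcal{F}$ into Eq.\ref{eq:gnn} to obtain the additive decomposition $\mathbf{h}_i = f_{U_i}(U_i,\mathcal{A}_i)+\sum_{j\in\pa(i)} f^{ij}_{\pmb{\theta}}(V_j)$ (via the \citet{kuo2010decompositions} decomposition underlying Thm.\ref{thm:gnnscm}), then invoke the fact that a finite sum of feedforward networks is itself a feedforward network to identify $\mathbf{h}_i$ with the NCM structural equation $\text{MLP}_i^{\pmb{\theta}}$ of \citet{xia2021causal}. Your additional bookkeeping (reparameterizing the exogenous prior to the uniform law of the base-NCM, and checking the $\mathcal{A}_i$-to-noise correspondence in the Markovian case) is more careful than the paper's proof, which handles these points only implicitly, but it does not constitute a different argument.
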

Because of space restrictions we provide the proof to Cor.\ref{cor:ncmgnn} and all subsequent mathematical results within the supplementary section. To be more precise, the NCM-Type 2 portrayed in Cor.\ref{cor:ncmgnn} is more fine-grained than the the definition of NCM in \citep{xia2021causal} since their formulation models structural equations using feedforward nets ($|V|$) while the NCM-Type 2 additionally models the \textit{dependency terms within} each of the structural equations ($|\mathcal{E}^2|$). Fig.\ref{fig:ncmt2} provides a schematic illustration of the discussed concepts, that is, both for the GNN to SCM conversion from Thm.\ref{thm:gnnscm} and the NCM-Type 2 comparison to regular NCM from Cor.\ref{cor:ncmgnn}.
\begin{figure}[!t]
\centering
\includegraphics[width=0.7\columnwidth]{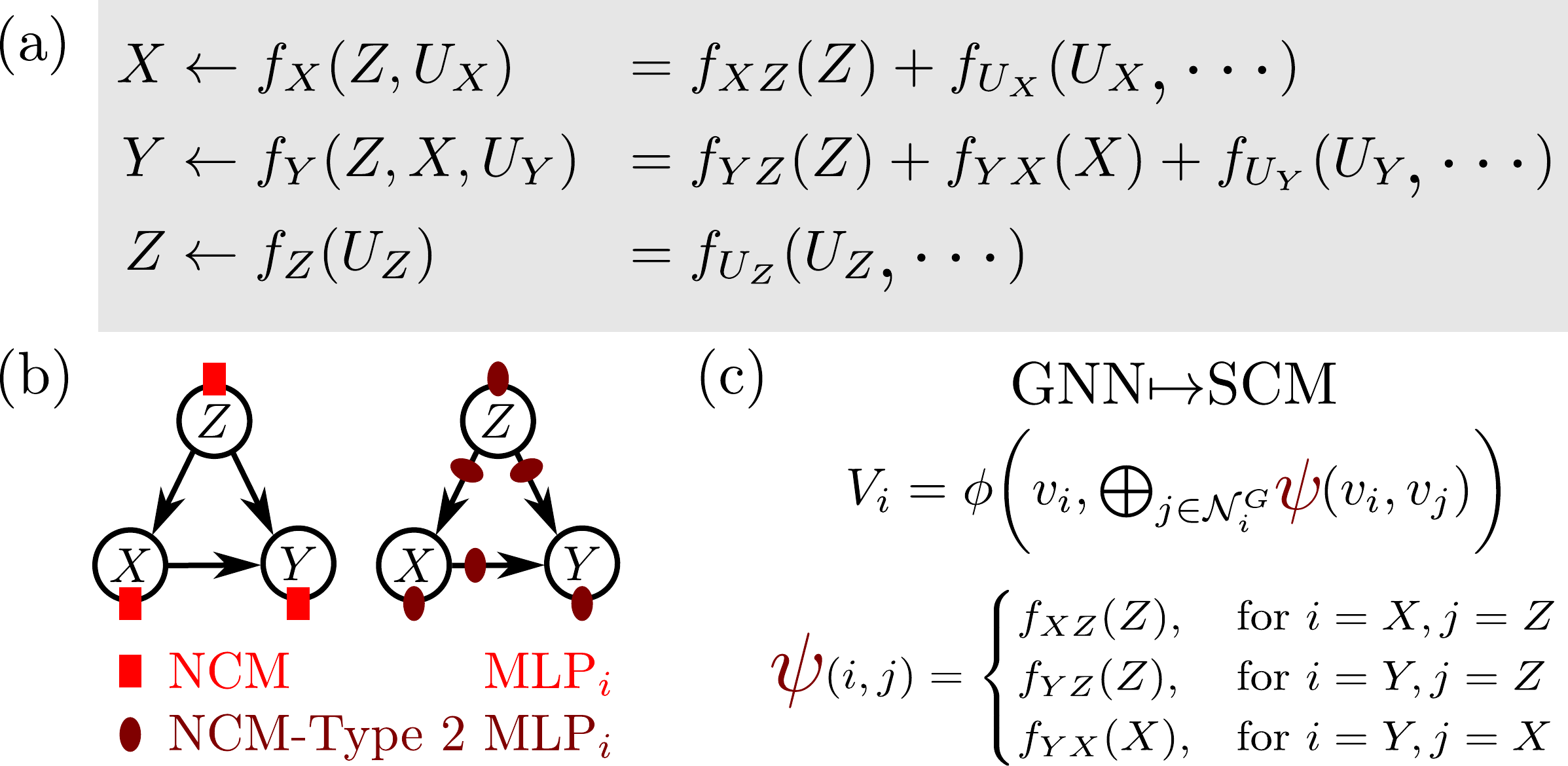}
\caption{\textbf{NCM-Type 2 and the GNN-SCM Conversion.} A schematic overview of the results established in Thm.\ref{thm:gnnscm} and Cor.\ref{cor:ncmgnn} (a) shows an example SCM as the underlying reality with its structural equations and their decompositions into univariate parent dependency-terms $f_{ij}$ alongside a potentially non-empty $(\mathcal{A}_i\neq \emptyset, \mathcal{A}_i{\in}2^{|\pa_i|})$ joining function $f_{U_i}(U_i, \mathcal{A}_i)$ (b) shows the NCM as defined by \citep{xia2021causal} which models on node-level opposed to the edge-level as for NCM-Type 2 (Cor.\ref{cor:ncmgnn}) (c) shows an example of an infeasible $\psi$ function that has to model all relevant dependency terms. (Best viewed in color.)}
\label{fig:ncmt2}
\end{figure} Again, the decomposition (or fine-grained view) in Fig.\ref{fig:ncmt2}(a) follows from \citep{kuo2010decompositions}. To illustrate, consider the following example:
\begin{align}
    &\mathfrak{C} = (\{f_X(Z,U_X):=Z\wedge U_X, \ f_Z(U_Z):=U_Z\}, \quad P(U_X, U_Z)), \\
&f_X = \begin{blockarray}{ccc}
U_X=0 & U_X=1\\
\begin{block}{(cc)c}
  0 & 1 & Z=1 \\
  0 & 0 & Z=0 \\
\end{block}
\end{blockarray} \iff \begin{aligned}
        f_{XZ}(Z) &+ f_{U_X}(U_X, Z)\\
        [Z] &+ [U_X - (Z\vee U_X)].
    \end{aligned} \label{eq:example}
\end{align}
The decomposition in (\ref{eq:example}) simply takes the identity of the parent ($Z$) while considering the negated logical OR for the remainder term $f_{U_X}$. Its sum then results in the original logical AND function $f_X$ of $\mathfrak{C}$. While the decomposition for this specific example does not seem to reveal any sensible advantage, it does for other examples more easily, e.g. when $f_X$ is a linear function of its arguments then the term $f_{U_X}$ will only depend on the unmodelled term $U_X$ and not the other variables (i.e., the argument list is empty $\mathcal{A}=\emptyset$). The advantage becomes evident in that we have a more fine-grained view onto NCM. On a side note, it is worth noting that the computation layer in Cor.\ref{cor:ncmgnn}, as soon as sharedness is being violated, generally is not considered to be a GNN layer anymore since sharedness just like adjacency information and permutation-equivariance belong to key properties of the GNN definition. Nonetheless, Cor.\ref{cor:ncmgnn} predicts a new type of NCM that should be investigated in more detail in future research as it should provide the same theoretical guarantees as regular NCM, while being more flexible and interpretable because of the fine-grained modelling scope (Fig.\ref{fig:ncmt2}(b)). 

\section{GNN-based Causal Inference}
Upon observing the difficulty of optimization for a general causal model based on GNN as suggested by both the variant in Thm.\ref{thm:gnnscm} (Fig.\ref{fig:ncmt2}(c)) and the NCM-variant in Cor.\ref{cor:ncmgnn} (Fig.\ref{fig:ncmt2}(b)), we decide to focus our theoretical investigation on what is reasonably attainable within the GNN framework since the inductive bias on graphs still seems appealing. The graph-intersection of these two model classes can in fact be leveraged to partially represent the PCH as we show next. I.e., up to rung 2 of interventional queries ($\mathcal{L}_2$). Instead of turning a GNN into an SCM (as in Thm.\ref{thm:gnnscm}), we will consider the reverse direction. Thereby, we define a GNN construction based on an SCM.
\begin{defin}\label{def:scmgnn}
\textbf{($\mathfrak{G}$-GNN construction.)} Let $\mathfrak{G}$ be the graph induced by SCM $\mathfrak{C}$. A GNN layer $f(\mathbf{D},\mathbf{A}_G)$ for which $G=\mathfrak{G}$ is said to be $\mathfrak{G}$-constructed.
%\mz{\textbf{SCM-GNN construction.} Use the induced graph of the SCM to construct the GNN.}
\end{defin}
We believe that $\mathfrak{G}$-GNN (Def.\ref{def:scmgnn}) make an important step towards connecting the graph structure induced by an SCM with the structured computation within a GNN layer. Developing further on this notion in future research might allow for neural-graph models that are inherently causal\footnote{Consider \citep{xia2021causal} for a treatise on Neural Causal Model (NCM) using sets of feed-forward neural networks.}.
\begin{prop}\label{prop:equiv}
\textbf{(Graph Mutilation Equivalence.)} Let $\mathfrak{C}$ be an SCM with graph $\mathfrak{G}$ and let $f$ be a $\mathfrak{G}$-GNN layer. An intervention $\doop(\mathbf{X}), \mathbf{X}\subseteq V,$ on both $\mathfrak{C}$ and $f$ produces the same mutilated graph.
%\mz{\textbf{SCM-GNN Graph Equivalence.} State that SCM intervention and GNN intervention will produce the same thing.}
\end{prop}
It is worth noting that an intervention within a GNN layer (Def.\ref{def:iGNN}) is dual to the notion of intervention within an SCM i.e., like observing within the mutilated graph is akin to interventions, computing a regular GNN layer on the mutilated graph is akin to intervening on the original GNN layer, that is, $f(\mathbf{D},\mathbf{A}_G{\mid} \doop(\mathbf{X}))=f(\mathbf{D},\mathbf{A}_{G^{\prime}})$ where $G^{\prime}$ is the graph upon intervention $\doop(\mathbf{X})$.
In the spirit of \citep{xia2021causal}, we define what it means for an SCM to be consistent with another causal model in terms of the PCH layers $\mathcal{L}_i$.
\begin{defin}\label{def:consistency}
\textbf{(Partial $\mathcal{L}_i$-Consistency.)} Consider a model $\mathcal{M}$ capable of partially emitting PCH, that is $\mathcal{L}_i$ for $i{\in}\{1,2\}$, and an SCM $\mathfrak{C}$. $\mathcal{M}$ is said to be partially $\mathcal{L}_i$-consistent w.r.t.\ $\mathfrak{C}$ if $\mathcal{L}_i(\mathcal{M})\subset\mathcal{L}_i(\mathfrak{C})$ with $|\mathcal{L}_i(\mathcal{M})|>0$.
\end{defin}
In the following we will usually omit the word partial since Def.\ref{def:consistency} is the only consistency employed. A causal model most generally defined can be considered to be something that can carry out causal inferences and in Def.\ref{def:consistency} we consider such models that are capable of emitting the $\mathcal{L}_i$-distributions. The definition proposes that an SCM can match with such a model, if it agrees on a subset of all conceivable distributions for a level $i$, which for $|\mathcal{L}_1|=1$ but $|\mathcal{L}_2|\rightarrow\infty$.
In their work on variational graph auto-encoders (VGAE) \citep{kipf2016variational} considered the application of the GNN model class to the posterior parameterization, $q(\mathbf{Z}\mid\mathbf{D},\mathbf{A})=f(\mathbf{D},\mathbf{A})$. However, their main incentive posed to be a generative model on graphs themselves i.e., $p(\mathbf{A}\mid\mathbf{Z})$. In the following, we define VGAE on the original data as traditionally done within vanilla VAE models that don't consider (or ignore) structured input.
\begin{defin}\label{def:dVGAE}
\textbf{(Variational Graph Auto-Encoder.)} We define a VGAE $\mathcal{V}{=}(q(\mathbf{Z}{\mid}\mathbf{D}), p(\mathbf{D}{\mid}\mathbf{Z}))$, as a data-generative model, with inference and generator model respectively such that $q{:=}f_{\pmb{\theta}}(\mathbf{D},\mathbf{A})$ is a GNN layer and $p{:=}g_{\pmb{\phi}}(\mathbf{D},\mathbf{Z})$ some parameterized data $\mathbf{D}$ dependent model, where $\pmb{\theta},\pmb{\phi}$ are the variational parameters.
\end{defin}
Since SCM induce a graph structure dictated by the set of structural equations that implicitly govern the structural relations between the modelled variables, it is important to define a density estimator which acts on the variable space opposed to the space of plausible graph structures (Def.\ref{def:dVGAE}). Using a GNN layer for modelling the original data does not compromise on the expressiveness of the model class as the following theorem suggests.
\begin{theorem}\label{thm:uda}
\textbf{(Universal Density Approximation.)} There exists a latent variable family $q$ and a corresponding data-generative VGAE $\mathcal{V}{=}(q,p)$ such that the modelled density can approximate any smooth density at any nonzero error.
\end{theorem}
This theorem suggests that the defined VGAE class, for a suitable choice of latent variable model, is a universal approximator of densities (UDA). A suitable choice for $q$ in Thm.\ref{thm:uda} is the mixture of Gaussians family \citep{goodfellow2014explaining, plataniotis2017gaussian}. Since causal estimates are determined by their theoretical identifiability from the partially available information on the true underlying SCM $\mathfrak{C}^*$ (e.g.\ structural constraints), UDA properties of the estimator are beneficial. Since the data-generative VGAE is a probabilistic model capable of approximating distributions to arbitrary precision, we are finally ready to define a causal model based on GNN layers by introducing \textit{interventional} GNN layers (Def.\ref{def:iGNN}) into said VGAE class.
\begin{defin}\label{def:iVGAE}
\textbf{(Interventional VGAE.)} An interventional VGAE is a data-generative VGAE $\mathcal{V}{=}(q,p)$ where both $q,p$ are set to be interventional GNN layers $f_i(\mathcal{D}_i,\mathbf{A}_G{\mid} \doop(\mathbf{X}))$ with $\mathcal{D}_i{\in}\{\mathbf{D},\mathbf{Z}\}$.
\end{defin}
While the reader might expect the data-generative VAE model to make use of the interventions within GNN layers from Def.\ref{def:iGNN}, what might come as surprise is the usage of a second (interventional) GNN layer as decoder because the second layer parameterization considers the reverse module $p(\cdot{\mid} \mathbf{Z})$ which takes the latent variable $\mathbf{Z}$ which only transitively via $\mathbf{D}$ acquires its connection to the adjacency $\mathbf{A}_G$. Interventional capability within both models is beneficial since the knowledge on the specific mutilation becomes available throughout the complete model. Modified decoder that leverage pre-defined inductive biases have been deployed in the literature by for instance \citep{kipf2018neural}. Fig.\ref{fig:iVGAE} illustrates this intuition and parallels it with the computation within an SCM. The interventional VGAE (iVGAE) is a causal model, like an SCM, capable of emitting $\mathcal{L}_i$-distributions from the PCH. With this, we are able to provide statements on the causal capabilities of this extended model class. More precisely, following the mathematical ideas of \citep{xia2021causal} we show that iVGAE overlap with the space of all SCMs $\Omega$.
\begin{figure}[t]
\centering
\includegraphics[width=0.9\columnwidth]{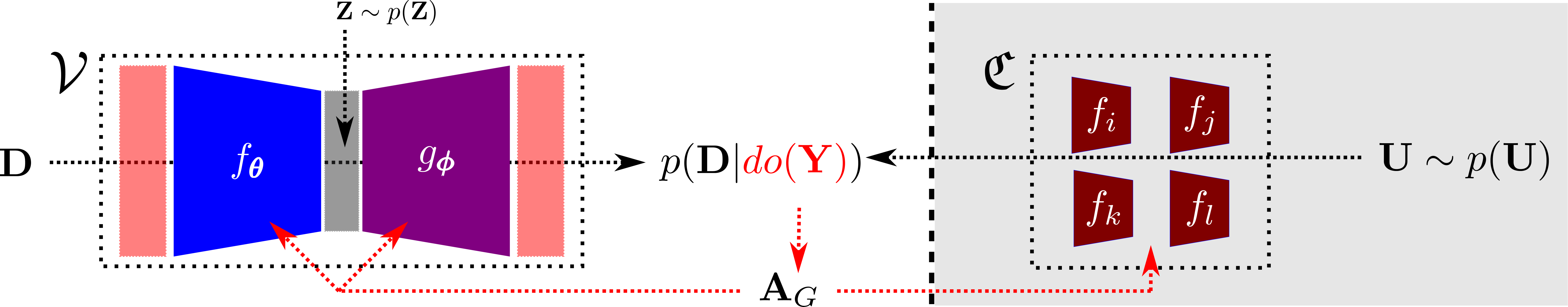}
\caption{\textbf{Neural-Causal Model based on Interventional GNN Layers.} A schematic overview of the inference process within the neural-causal iVGAE $\mathcal{V}$ model and its similarities to the same process within an SCM $\mathcal{C}$. While $\mathfrak{C}$ can be directly queried for the causal effect $p(\mathbf{D}{\mid}\doop(\mathbf{Y}))$ through valuations $\mathbf{U}$, the iVGAE makes use of corresponding data $\mathbf{D}$. (Best viewed in color.)}
\label{fig:iVGAE}
\end{figure}
\begin{theorem}\label{thm:express}
\textbf{(Expressivity.)} For any SCM $\mathfrak{C}$ there exists an iVGAE $\mathcal{V}(\pmb{\theta},\pmb{\phi})$ for which $\mathcal{V}$ is $\mathcal{L}_2$-consistent w.r.t\ $\mathfrak{C}$.
\end{theorem}
Similar to the peculiar specifications of NCM \citep{xia2021causal}, an iVGAE does not loose in expressivity to SCM regarding the first two rungs of the PCH. Thus it can also act as a proxy for causal inferences. An important consequence of the CHT \citep{bareinboim20201on}, which also applies to iVGAE is that the PCH is kept in tact across scenarios.
\begin{corollary}\label{cor:cht}
\textbf{(iVGAE Partial Causal Hierarchy Theorem.)} Consider the sets of all SCM and iVGAE, $\Omega,\Upsilon$, respectively. If for all $\mathcal{V}\in\Upsilon$ it holds that $\mathcal{V}$ is $\mathcal{L}^p_1(\mathcal{V}){=}\mathcal{L}^p_1(\mathfrak{C})\implies\mathcal{L}^q_2(\mathcal{V}){=}\mathcal{L}^q_2(\mathfrak{C})$ with $\mathfrak{C}\in\Omega$, where $\mathcal{L}^p_i \subset \mathcal{L}_i$ is a selection $p,q$ over the set of all level 1,2 distributions respectively, then we say that layer 2 of iVGAE collapses relative to $\mathfrak{C}$. On the Lebesgue measure over SCMs, the subset in which layer 2 of iVGAE collapses to layer 1 has measure zero. 
\end{corollary}
As previously pointed out by \citep{xia2021causal, bareinboim20201on}, the CHT does not impose a negative result by claiming impossibility on low-to-high-layer inferences, however, it suggests that even sufficient expressivity of a model does not allow for the model to overcome the boundaries of the layers (unless causal information e.g.\ in the form of structural constraints is available). Another noteworthy consequence of both Thms.\ref{thm:express} and \ref{thm:gnnscm} is the incapability of handling counterfactuals.
\begin{corollary}\label{cor:l3}
\textbf{(iVGAE Limitation.)} For any SCM $\mathfrak{C}$ there exists no iVGAE $\mathcal{V}$ such that $\mathcal{V}$ is $\mathcal{L}_3$-consistent w.r.t.\ $\mathfrak{C}$.
\end{corollary}
While the formulation of an iVGAE restricts itself from the full expressivity of the PCH (like SCM or NCM), there are no restrictions on the lower causal layers in addition to a more compact overall model. In Def.\ref{def:scmgnn} we pointed to the relation between SCM and GNN via the deployed graph. The following theorem states that if $\mathfrak{G}$-GNN layers are being deployed within a corresponding VGAE, then we can have consistency to any SCM of choice.
\begin{corollary}\label{cor:l2rep}
\textbf{($\mathcal{L}_2$ Representation.)} For any SCM $\mathfrak{C}$ that induces a graph $\mathfrak{G}$, there exists a corresponding iVGAE $\mathcal{V}{=}(q,p)$ where $q,p$ are $\mathfrak{G}$-GNN layers such that $\mathcal{V}$ is $\mathcal{L}_2$-consistent with $\mathfrak{C}$.
%\mz{\textbf{$L_2$ Representation.} State that there is always a SCM-GNN based VGAE construction that is $\mathcal{L}_i$-consistent.}
\end{corollary}
The main claim of this theorem is two-fold, first, that the expressivity established in Thm.\ref{thm:express} is being preserved for $\mathfrak{G}$-GNN layers, and second, that the shared graph $\mathfrak{G}$ allows for tighter integration. 

\section{Identifiability and Estimation}
In line with \citep{xia2021causal}, we finally set stage for identifiability within the neural-causal iVGAE model by first extending their notion of neural identifiability, since iVGAE is a neural model that makes trades the set of feedforward networks for two interventional graph neural networks.
\begin{defin}\label{def:neuralident}
\textbf{(Neural Identifiability.)} Again, let $\Omega,\Upsilon$ be the sets of SCMs and corresponding $\mathfrak{G}$-GNN based iVGAE. For any pair $(\mathfrak{C},\mathcal{V})\in\Omega\times\Upsilon$, a causal effect $p(\mathbf{V}_i{\mid}\doop(\mathbf{V}_j))$ is called neurally identifiable iff the pair agrees on both the causal effect [$p^{\mathfrak{C}}(\mathbf{V}_i{\mid}\doop(\mathbf{V}_j))=p^{\mathcal{V}}(\mathbf{V}_i{\mid}\doop(\mathbf{V}_j))$] and the observational distributions [$p^{\mathfrak{C}}(\mathbf{V})=p^{\mathcal{V}}(\mathbf{V})$].
\end{defin}
This definition enforces a matching on all the relevant levels such that a causal effect becomes identifiable. With this, a final central claim on the relation of general identifiable effects and neurally identifiable effects (i.e., using the iVGAE as the model of choice) is being established.
\begin{theorem}\label{thm:dualid}
\textbf{(Dual Identification.)} Consider the causal quantity of interest $Q=p^{\mathfrak{C}}(\mathbf{V}_i{\mid}\doop(\mathbf{V}_j))$, $\mathfrak{G}$ the true causal graph underlying SCM $\mathfrak{C}\in\Omega$ and $p(\mathbf{V})$ the observational distribution. $Q$ is neurally identifiable from iVGAE $\mathcal{V}{\in}\Upsilon$ with $\mathfrak{G}$-GNN modules iff $Q$ is identifiable from $\mathfrak{G}$ and $p(\mathbf{V})$.
\end{theorem}
\begin{algorithm}[!t]
\caption{Causal Inference with GNN}
\label{alg:inference}
\textbf{Input}: iVGAE modules $\mathcal{V}_i$, Intervention $\mathbf{v}_j$, Variable $\mathbf{v}_i$\\
\textbf{Parameter}: Number of Samples $n$\\
\textbf{Output}: $\log(\hat{p}(\mathbf{v}_i{\mid}\doop(\mathbf{V}_j=\mathbf{v}_j)))$
\begin{algorithmic}[1] %[1] enables line numbers
\STATE Let $\hat{p} = 0$ \\
\FOR{$i=1..n$}
\STATE Encoding $\mathbf{z}, \log(p(\mathbf{z}\doop(\mathbf{v}_j)))  \leftarrow \mathcal{V}_1(\mathbf{v}_i, \mathbf{v}_j)$
\STATE Decoding $\log(p(\mathbf{v}_i{\mid}, \mathbf{z}\doop(\mathbf{v}_j)))  \leftarrow \mathcal{V}_2(\mathbf{z}, \mathbf{v}_i, \mathbf{v}_j)$
\STATE \parbox[t]{\linewidth}{Aggregate $\hat{p}\leftarrow \exp(\log(p(\mathbf{v}_i{\mid}, \mathbf{z},\doop(\mathbf{v}_j))) - \log(p(\mathbf{z}{\mid}\doop(\mathbf{v}_j))))$}
\ENDFOR
\STATE \textbf{return} $\log(\hat{p}) - \log(n)$
\end{algorithmic}
\end{algorithm}
This theorem is analogues to the statement on neural identifiability in \citep{xia2021causal} thus following the same implications in that theoretically lower-to-higher-layer inference is possible within our neural setting, avoiding the usage of $\doop$-calculus altogether. The hope lies within the differentiability of the considered (neural) models to allow for leveraging causal inferences automatically from large-scale data. From a causal perspective, it is important to note that Def.\ref{def:neuralident} alongside the theoretical result Thm.\ref{thm:dualid} consider identifiability and not actual identification, the former refers to the power of cross-layer inferences while the latter refers to the process of using e.g.\ $\doop$-calculus for SCM or mutilation in the case of NCM to obtain the estimands\footnote{These estimands can then be estimated using data.}. For the iVGAE identification thus refers to cross-layer modelling which is a special case of multi-layer modelling, to which we will simply refer to as estimation. The estimation is performed using a modified version of the variational objective in Eq.\ref{eq:elbo} to respect the causal quantities, $ \mathbb{E}_q[\log p(\mathbf{V}{\mid} \mathbf{Z}, do(\mathbf{W}))] - \text{KL}(q(\mathbf{Z}{\mid} do(\mathbf{W})){\mid\mid}p(\mathbf{Z}))$, where $\mathbf{W}{\subset}\mathbf{V}$ are intervened variables and $p,q$ the $\mathfrak{G}$-GNNs of the iVGAE model. After optimizing the iVGAE model with this causal ELBO-variant, we can consider any quantity of interest dependent on the modelled levels $Q(\mathcal{L}_i)$. One interesting choice for $Q$ is the average treatment effect (ATE), $\text{ATE}(X,Y):=\mathbb{E}[Y{\mid}\doop(X{=}1)]-\mathbb{E}[Y{\mid}\doop(X{=}0)]$, where the binary\footnote{Note that we can extend the ATE to be categorical/continuous.} variable $X$ is being referred to as treatment. In Alg.\ref{alg:inference} we show how marginal inference under intervention is being performed.
\begin{figure}[!t]
\centering
\includegraphics[width=0.9\textwidth]{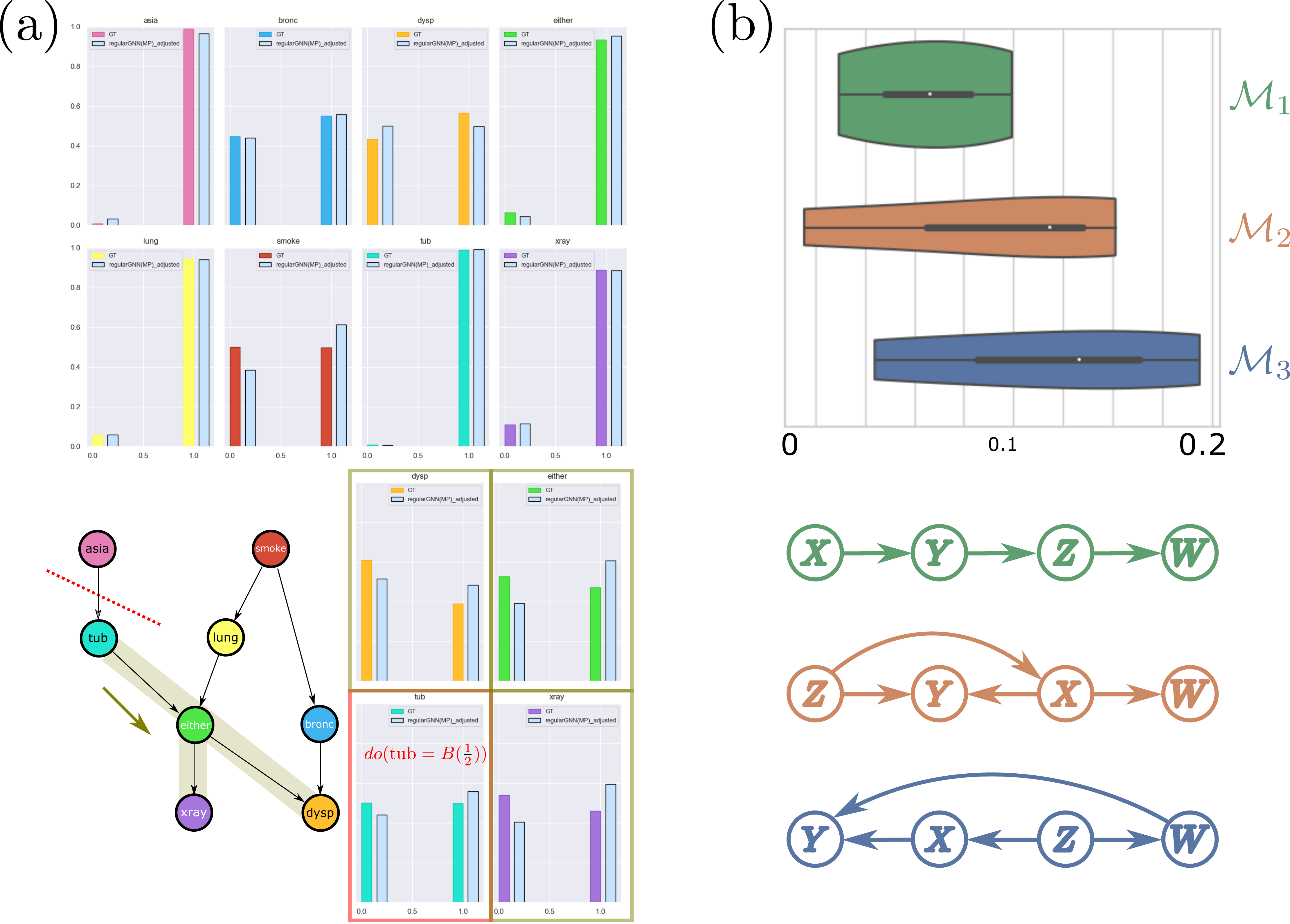}
\caption{\textbf{GNN Capture Causal Influence.} Empirical illustration (a) shows causal density estimation (top $\mathcal{L}1$, bottom $\mathcal{L}_2$ with $\doop(\text{tub}=\mathcal{B}(\frac{1}{2}))$) on ASIA where the property 'tub' turns into a fair coin flip (red box) that also causally influences its descendants (olive boxes) (b) shows estimation of ATE of $X$ on $Y$ for different SCM-families (Best viewed in color.)}
\label{fig:Density}
\vspace{-.75cm}
\end{figure}

\section{Empirical Illustration} \label{sec:emp}
\setlength{\parskip}{1em}
To assist our theoretical results we provide an extensive set of empirical illustrations. Due to space constraints, we only highlight relevant key results while pointing to the \textbf{extensive Appendix}.

\noindent{\bf Density Estimation.} We investigate properties of the causal density estimation when using iVGAE (Def.\ref{def:iVGAE}). We inspect various aspects like for instance how interventional influence propagates along the topological sequence or how precisely the model is able to fit. Fig.\ref{fig:Density}(a) shows an iVGAE model on the real-world inspired ASIA data set \citep{lauritzen1988local} that estimates the observational and interventional distributions from data sufficiently well in terms of both quality and recall. Inference is performed using Alg.\ref{alg:inference}. An extensive study is provided in the Appendix.

\noindent{\bf Causal Effect Estimation.} We investigate the estimation of the ATE of $X$ on $Y$ using iVGAE on different families of SCMs, that is, different structures with different parameterizations. Estimation considers the predictive quality amongst different parameterizations. In Fig.\ref{fig:Density}(b) chain ($\mathcal{M}_1$), confounder ($\mathcal{M}_2$), and backdoor ($\mathcal{M}_3$) structures are shown. The iVGAE model adequately captures the un- and confounded causal effects. An extensive elaboration is provided in the Appendix.
\setlength{\parskip}{0em}

\section{Conclusions and Future Work}
We derive the first theoretical connection between GNNs and SCMs (Thm.\ref{thm:gnnscm}) from first principles and in the process extend the family of neural causal models with two new classes: NCM-Type 2 (Cor.\ref{cor:ncmgnn}) and iVGAE (Def.\ref{def:iVGAE}). We provide several theoretical results on the feasibility, expressivity and identifiability (Thms.\ref{thm:express},\ref{thm:dualid}; Cors.\ref{cor:l3},\ref{cor:l2rep}). To support the theoretical results empirically, we systematically investigate practical causal inference on several benchmarks. Following, an extension that supports the complete PCH without feasibility trade-off would be desirable. Differently encoding the intervention within GNN (Def.\ref{def:iGNN}) or concepts like PMP \citep{strathmann2021persistent} might allow for this direction. Further, testing on larger causal systems is a pragmatic, natural next step.

\clearpage
% Acknowledgments---Will not appear in anonymized version
\acks{This work was supported by the ICT-48 Network of AI Research Excellence Center “TAILOR" (EU Horizon 2020, GA No 952215) and by the Federal Ministry of Education and Research (BMBF; project “PlexPlain”, FKZ 01IS19081). It benefited from the Hessian research priority programme LOEWE within the project WhiteBox, the HMWK cluster project “The Third Wave of AI.” and the Collaboration Lab “AI in Construction” (AICO).}

\bibliography{clear2022}

%%%%%%%%%%%%%%%%%%%%%%%%%%%%%%%%%%%%%%%%%%%%%%%%%%%%%%%%%%%%%%%%%
%%%%%%%%%%%%%%%%%%%%%%%%%%%%%%%%%%%%%%%%%%%%%%%%%%%%%%%%%%%%%%%%%
%%%
%%% APPENDIX
%%%
%%%%%%%%%%%%%%%%%%%%%%%%%%%%%%%%%%%%%%%%%%%%%%%%%%%%%%%%%%%%%%%%%
%%%%%%%%%%%%%%%%%%%%%%%%%%%%%%%%%%%%%%%%%%%%%%%%%%%%%%%%%%%%%%%%%
\appendix

\clearpage % added by MZ for clarity during initial writing
\paragraph{Appendix for Relating Graph Neural Networks to Structural Causal Models.} We make use of this appendix following the main paper to provide the proofs to the main theorems, propositions and corollaries in addition to systematic investigations regarding practical causal inference in terms of causal effect estimation and general density estimation.

\setcounter{theorem}{0}
\setcounter{prop}{0}
\setcounter{corollary}{0}

\section{Proofs}
This section provides all the proofs for the mathematical results established in the main paper.

\subsection{Proofs for Theorem \ref{thm:gnnscm} and Corollary \ref{cor:ncmgnn}}
While Thm.\ref{thm:gnnscm} does not talk about optimization and feasibility, still it suggests that indeed GNN can be converted into SCM by suggesting that there always exists at least one construction of such. In the following we prove the theorem by giving a general construction scheme.
\begin{theorem}\label{thm:gnnscm}
\textbf{(GNN-SCM Conversion.)} Consider the most general formulation of a message-passing GNN node computation $\mathbf{h}_i{:} \mathcal{F}\mapsto \mathcal{F}^{\prime}$ as in Eq.\ref{eq:gnn}. For any SCM $\mathfrak{C}{=}(\mathbf{S},P(\mathbf{U}))$ there exists always a choice of feature spaces $\mathcal{F},\mathcal{F}^{\prime}$ and shared functions $\phi,\psi$, such that for all structural equations $f{\in}\mathbf{S}$ it holds that $\mathbf{h}_i = f_i$.
\end{theorem}
\begin{proof}
It is sufficient to provide a general construction scheme on SCMs. Therefore, let $\mathfrak{C}{=}(\mathbf{S},P(\mathbf{U}))$ be any SCM. Let $f_i(\pa(i),U_i)=f_{i}(U_i, \mathcal{A}_i)+\sum_{j\in\pa(i)} f_{ij}(V_j)$ be a structural equation ($f_i\in\mathbf{S}$) and its scalar-decomposition following Thm.1 in \citep{kuo2010decompositions} where $\mathcal{A}{\in}2^{|\pa_i|}$ is a potentially empty argument list. We choose the following mapping for the respective GNN computation components:
\begin{align}
\mathcal{F} &= \mathbf{V}\cup\mathbf{U} = \mathcal{F}^{\prime} \\
\phi(i,\dots) &= f_{i}(U_i, \mathcal{A}_i) + \sum \dots \\
\psi(i,j) &= f_{ij}
\end{align} where ($\cdots$) represents the remainder of the GNN-computation. Finally, it holds that
\begin{align}
\begin{split}
\mathbf{h}_i &= \phi\bigg(\mathbf{d}_i, \bigoplus_{j\in\mathcal{N}^G_i} \psi(\mathbf{d}_i, \mathbf{d}_j)\bigg) \\
&= f_{i}(U_i, \mathcal{A}_i) + \sum_{j\in\pa(i)} f_{ij}(V_j) = f_i.
\end{split}
\end{align}
\end{proof} A simple deduction of Thm.\ref{thm:gnnscm} in which we allow for the violation of sharedness, which lies at the core of the GNN formulation, leads to the formulation of a more fine-grained NCM model than what has been defined by \cite{xia2021causal}. It is more fine-grained in that this NCM-Type 2 operates on the edge-level opposed to the node-level as for regular NCM.
\begin{corollary}\label{cor:ncmgnn}
\textbf{(NCM-Type 2.)} Allowing for the violation of sharedness of $\psi$ as dicated in Thm.\ref{thm:gnnscm} and choosing $\mathcal{F}=\mathcal{F}^{\prime}=\mathbf{U}\cup\mathbf{V}$ to be the union over endo- and exogenous variables, $\phi(i,\dots)=f_{i}(U_i, \mathcal{A}_i)+\sum(\dots)$ to be a sum-aggregator with noise term selection with $\mathcal{A}_i{\in} 2^{|\pa_i|}$, and $\psi=\{f^{ij}_{\pmb{\theta}}\}$ to be the dependency terms of the structural equations $f_i$ modelled as feedforward neural networks. Then the computation layer $\{\mathbf{h}_i\}^{|V|}_i$ is a special case of the NCM as in \citep{xia2021causal}.
\end{corollary}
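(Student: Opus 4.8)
The plan is to establish the set inclusion $\{\text{NCM-Type 2}\}\subseteq\{\text{NCM}\}$, i.e.\ to show that every object produced by the stated construction is an instance of the NCM of \citep{xia2021causal}, and then to argue that the inclusion is strict so that it is genuinely a \emph{special case}. First I would recall the NCM definition: an NCM is an SCM $\hat{\mathfrak{C}}=(\hat{\mathbf{U}},\mathbf{V},\hat{\mathbf{F}},P(\hat{\mathbf{U}}))$ in which every mechanism $\hat{f}_i\in\hat{\mathbf{F}}$ is a feedforward neural network mapping the values of $\pa(i)\cup\{U_i\}$ to $V_i$, over a fixed exogenous distribution. It then suffices to verify that the construction (i) yields a well-defined SCM over the graph $G(\mathfrak{C})$, and (ii) expresses each node mechanism as exactly such a feedforward network.

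For (i) I would invoke Thm.\ref{thm:gnnscm} directly. The scalar decomposition $f_i(\pa(i),U_i)=f_{i}(U_i,\mathcal{A}_i)+\sum_{j\in\pa(i)}f_{ij}(V_j)$ of \citep{kuo2010decompositions} exists for every $f_i\in\mathbf{S}$, so setting $\psi=\{f^{ij}_{\pmb{\theta}}\}$ and $\phi(i,\dots)=f_{i}(U_i,\mathcal{A}_i)+\sum(\dots)$ reproduces precisely $\mathbf{h}_i=f_i$. Retaining the original product distribution $P(\mathbf{U})$ and the induced graph, the collection $\{\mathbf{h}_i\}^{|V|}_i$ is a valid SCM by the same reasoning as in the theorem; crucially, dropping the sharedness of $\psi$ changes nothing about well-definedness, since each $\mathbf{h}_i$ remains a function of $\pa(i)$ and $U_i$ alone.

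For (ii), which is the crux, I would rely on the closure of feedforward architectures under summation: the per-edge networks $\{f^{ij}_{\pmb{\theta}}\}_{j\in\pa(i)}$ placed in parallel and followed by a summation node, together with the noise-selection term $f_{i}(U_i,\mathcal{A}_i)$ taken itself to be feedforward, constitute a single feedforward network $\hat{f}_i$ on the joint input $(\pa(i),U_i)$. Hence each $\mathbf{h}_i$ satisfies the NCM mechanism requirement and the tuple is an NCM. The inclusion is strict because the NCM-Type 2 mechanisms are confined to this \emph{additive, edge-decomposed} form (one subnetwork per incoming edge, $|\mathcal{E}^2|$ terms), whereas a general NCM admits an arbitrary joint network per node ($|V|$ terms); every NCM-Type 2 is therefore an NCM but not conversely, which is exactly the claimed sense of a more fine-grained special case.

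The main obstacle I anticipate is the representational bookkeeping in step (ii): arguing cleanly that the additive composition of per-edge networks plus the noise term \emph{lies inside} the unrestricted node-level network class rather than merely approximating it, and that the argument-list construct $f_{i}(U_i,\mathcal{A}_i)$ with $\mathcal{A}_i\in2^{|\pa_i|}$ can be folded into the single network $\hat{f}_i$ without altering the dependence structure. A secondary point is checking that the exogenous distribution of the construction is admissible under the noise model assumed by \citep{xia2021causal}; since $P(\mathbf{U})$ is a product distribution over the same $U_i$, this reduces to a reparameterization and should pose no real difficulty.
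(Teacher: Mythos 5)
Your proof is correct and takes essentially the same route as the paper's: the Kuo et al.\ scalar decomposition of each $f_i$ combined with the closure of feedforward networks under summation (the paper cites Hornik et al.\ for the fact that a sum of MLPs is again an MLP, where you instead construct the parallel-subnetworks-plus-summation-node architecture directly, which is cleaner since it shows exact membership rather than approximation). Your two "obstacle" points, the strictness of the inclusion and the reconciliation of the product distribution $P(\mathbf{U})$ with the $\text{Unif}(0,1)$ noise model assumed by the NCM definition, are both handled correctly by you and are in fact glossed over in the paper's own proof, so your write-up is, if anything, slightly more complete.
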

\begin{proof}
The proposed computation layer (NCM-Type 2) is a special case in the way it specifies the function approximators and in that it covers non-Markovianity i.e., no hidden confounding or relations on the noise terms. A NCM $\mathfrak{N}$ is specified the same way as an SCM $\mathfrak{C}$ with the difference being that the the noise terms being uniformly distributed over the intervall $[0,1]$, that is $U\sim\text{Unif}(0,1)$, and the structural equations being parameterized by feedforward neural networks $f_i := \text{MLP}_i^{\pmb{\theta}}$ with learnable parameters $\pmb{\theta}$. \citep{kuo2010decompositions} suggests we know that $f_i = f_{i}(U_i, \mathcal{A}_i) + \sum_{j\in\pa(i)} f_{ij}(V_j)$. Furthermore, we know that $\text{MLP}_i^{\pmb{\theta}} = \sum_k \text{MLP}_k^{\pmb{\theta}_k}$ \citep{hornik1989multilayer}. Thereby, we have that 
\begin{align}
\begin{split}
\mathbf{h}_i = f_{i}(U_i, \mathcal{A}_i) + \sum_j f_{ij}(V_j) = \sum_j \text{MLP}_j^{\pmb{\theta}_j} = \text{MLP}_i^{\pmb{\theta}} = f_i
\end{split}
\end{align} where $i\in\{1\dots |V|\}, j\in\pa(i)$ and $\mathcal{A}_i\in2^{|\pa_i|}$.
\end{proof}

\subsection{Proof for Proposition \ref{prop:equiv}}
Prop.\ref{prop:equiv} reassures that the established connection between SCM and GNN based on the graph that is being induced by the former is natural. More specifically, an intervention within such a $\mathfrak{G}$-GNN will produce the same mutilated graph as an intervention within the SCM.
\begin{prop}\label{prop:equiv}
\textbf{(Graph Mutilation Equivalence.)} Let $\mathfrak{C}$ be an SCM with graph $\mathfrak{G}$ and let $f$ be a $\mathfrak{G}$-GNN layer. An intervention $\doop(\mathbf{X}), \mathbf{X}\subseteq V,$ on both $\mathfrak{C}$ and $f$ produces the same mutilated graph.
%\mz{\textbf{SCM-GNN Graph Equivalence.} State that SCM intervention and GNN intervention will produce the same thing.}
\end{prop}
\begin{proof}
Following Def.3 in \citep{bareinboim20201on}, an interventional SCM $\mathfrak{C}_{\mathbf{x}}$ is a submodel of the original SCM $\mathfrak{C}$ where the structural equations for variables $\mathbf{X}$ are being replaced by the assignment $\mathbf{x}$. Through this operation, denoted by $\doop(\mathbf{X}=\mathbf{x})$, the dependency between the causal parents of any node $V_i\in\mathbf{X}$ is being lifted (as long as the assignment $\mathbf{x}$ is not dependent on the parents). Therefore the mutilated graph is given by $\mathfrak{G}_{\mathbf{x}}=(\mathbf{V}, \mathfrak{G}^E\setminus\{(j,i)\mid V_j\in\pa_i, V_i\in\mathbf{X}\}$ where $\mathfrak{G}^E$ denotes the edge list of the original graph. Intervening on a $\mathfrak{G}$-GNN layer implicitly considers a modified neighborhood $\mathcal{M}^G_i = \{j \mid j \in \mathcal{N}^G_i, j\not\in \pa_i {\iff} i\in\mathbf{X} \}$ which removes exactly the relatons to the parents. Since $G=\mathfrak{G}$, the mutilated graphs are the same.
\end{proof}

\subsection{Proof for Theorem \ref{thm:uda}}
Thm.\ref{thm:uda} suggests a long standing result on universal approximators for densities \citep{goodfellow2016deep,plataniotis2017gaussian} but applied to the data-driven VGAE. Thus the proof follows analogously.
\begin{theorem}\label{thm:uda}
\textbf{(Universal Density Approximation.)} There exists a latent variable family $q$ and a corresponding data-generative VGAE $\mathcal{V}{=}(q,p)$ such that the modelled density can approximate any smooth density at any nonzero error.
\end{theorem}
\begin{proof}
It is sufficient for the proof to show one example of a suitable family, that is encodable by a data-driven VGAE (Def.\ref{def:dVGAE}) and that can act as a universal approximator of densities. Let's inspect the Gaussian Mixture Model family of latent variable models. We will use Wiener's approximation theorem \citep{wiener1932tauberian} and knowledge delta functions of positive type. A delta family of positive type is defined under following conditions
\begin{enumerate}
    \item $\int_{-a}^a \delta_{\lambda}(x) \,dx \rightarrow\lambda$ as $\lambda\rightarrow\lambda_0$ for some $a$.
    \item For every constant $\gamma > 0$, $\delta_{\lambda}$ tends to zero uniformly for $\gamma\leq|x|\leq\infty$ as $\lambda\rightarrow\lambda_0$.
    \item $\delta_{\lambda}(x)\geq 0$ for all $x$ and $\lambda$.
\end{enumerate} and if it additionally satisfies $\int_{-\infty}^{\infty} \delta_{\lambda}(x)\,dx=1$, then such a function definies a probability density for all $\lambda$. A Gaussian density is delta in the zero limit of its variance. The sequence $p_{\lambda}(x)$ formed by the convolution of the delta function $\delta_{\lambda}$ and an arbitrary density function $p$ can be expressed in terms of Gaussian. Thus,
\begin{align}
\begin{split}
    p_{\lambda}(x) &= \int_{-\infty}^{\infty} \delta_{\lambda}(x-u)p(u)\,du \\ &= \int_{-\infty}^{\infty} \mathcal{N}_{\lambda}(x-u)p(u)\,du,
\end{split}
\end{align} forms the basis for the sum of Gaussians. It follows that $p_{\lambda}(x)$ can in fact be approximated with a Riemann sum $p_{\lambda}(x) = \frac{1}{k}\sum_{i=1}^n \mathcal{N}_{\lambda}(x-x_i)[\xi_i-\xi_{i-1}]$ on some interval $(a,b)$ with interval-boundary points $\xi_i,\xi_0=a,\xi_n=b$. It follows that we can express $p_{\lambda,n}(x)$ as a convex combination of different variance Gaussians. Finally, the sought density can be expressed as
\begin{equation}
    p(x) = \sum_{i=1}^k w_i \mathcal{N}(x; \mu_i,\Sigma)
\end{equation} with $\sum w = 1$ and $w_i\geq 0,\forall i$.
Note that $x=\mathbf{x}\in\mathbb{R}^d$ holds for all the previously established results. Thus a GMM can approximate any density. To conclude, a GMM can be modelled by data-driven VGAE by designing the output of the encoder $f_{\pmb{\theta}}(\mathbf{D},\mathbf{A})=\mathbf{Z}$ such that $\mathbf{z}_i:=w_i,\mu_i,\Sigma_i$.
\end{proof}

\subsection{Proofs for Theorem \ref{thm:express} and Corollaries \ref{cor:cht} and \ref{cor:l3}}
The iVGAE is a feasible model opposed to the GNN-reparameterization from Thm.\ref{thm:gnnscm} for the price of expressivity. Nonetheless, the following Thm.\ref{thm:express} reassures that the iVGAE can model causal quantites up to the second level of the PCH, namely that of interventions ($\mathcal{L}_2$).
\begin{theorem}\label{thm:express}
\textbf{(Expressivity.)} For any SCM $\mathfrak{C}$ there exists an iVGAE $\mathcal{V}(\pmb{\theta},\pmb{\phi})$ for which $\mathcal{V}$ is $\mathcal{L}_2$-consistent w.r.t\ $\mathfrak{C}$.
\end{theorem}
\begin{proof}
Let $\mathcal{V}(\pmb{\theta},\pmb{\phi})$ be an iVGAE and $\mathbf{D}=\{\mathbf{d}_i\}_{i=1}^{|\mathbf{V}|}$ a data set on the variables $\mathbf{V}$ of an arbitrary SCM $\mathfrak{C}$ for multiple interventions of said SCM, that is, $\mathbf{d}_i\sim p_k\in\mathcal{L}_j(\mathfrak{C})$ for $j\in\{1,2\}$ and $k>1$. Note that the observational case ($\mathcal{L}_1$) is considered to be an intervention on the empty set, $p(\mathbf{V}{\mid}\doop(\emptyset))=p(\mathbf{V})$. Through Thm.\ref{thm:uda} we know that there always exists a parameterization $\theta,\phi$ for $\mathcal{V}$ such that any distribution $p$ can be modelled to an arbitrary precision, thus $p^{\mathcal{V}}=p_k$. Since $k>1$ we have that $\mathcal{V}$ models the PCH up to level partially $\mathcal{L}_2(\mathcal{V})$. Finally, the distributions are modelled relative to $\mathfrak{C}$ and $\mathcal{L}_2(\mathcal{V})\subset \mathcal{L}_2(\mathfrak{C})$.
%The proof is analogue to Thm.1 in \citep{xia2021causal} but saves a lot in that a projection onto a canonical SCM is not necessary Because of the restriction on $\mathcal{L}_2$ as suggested by \citep{peters2017elements}.
\end{proof}
It is important to note that the $\mathcal{L}_i$-consistency defined in Def.\ref{def:consistency} is a weakened notion of consistency across causal models since any model might only agree on a single distribution of a given level $p\in\mathcal{L}_i$. Due to the nature of iVGAE being a compact model class opposed to a neural copy of an SCM, this is the only consistency achievable. However, it is not a negative impossibility result, on the contrary, iVGAE can perform any causal inference within the first two rungs of the PCH when provided with the corresponding data and sufficient model capacity i.e., the trade-off in expressivity solely comes from the fact that a compression on the model description is being performed. An important corollary to Thm.\ref{thm:express} is that the CHT still applies across settings. That is, causal inferences within iVGAE as choice of parameterization remain sensible since the layer boundaries are strict.
\begin{corollary}\label{cor:cht}
\textbf{(iVGAE Partial Causal Hierarchy Theorem.)} Consider the sets of all SCM and iVGAE, $\Omega,\Upsilon$, respectively. If for all $\mathcal{V}\in\Upsilon$ it holds that $\mathcal{V}$ is $\mathcal{L}^p_1(\mathcal{V}){=}\mathcal{L}^p_1(\mathfrak{C})\implies\mathcal{L}^q_2(\mathcal{V}){=}\mathcal{L}^q_2(\mathfrak{C})$ with $\mathfrak{C}\in\Omega$, where $\mathcal{L}^p_i \subset \mathcal{L}_i$ is a selection $p,q$ over the set of all level 1,2 distributions respectively, then we say that layer 2 of iVGAE collapses relative to $\mathfrak{C}$. On the Lebesgue measure over SCMs, the subset in which layer 2 of iVGAE collapses to layer 1 has measure zero. 
\end{corollary}
\begin{proof}
The proof is analogue to \citep{xia2021causal,bareinboim20201on} and assumes Fact 1 (or Thm.1 for the latter) to define an SCM-collapse relative to some SCM $\mathfrak{C}^{\prime}$. If layer 2 SCM-collapses to layer 1 relative to $\mathfrak{C}^{\prime}$ then any SCM $\mathfrak{C}$ will follow the properties $\mathcal{L}_1^p(\mathfrak{C})=\mathcal{L}_1^p(\mathfrak{C}^{\prime})$ and $\mathcal{L}_2^k(\mathfrak{C})=\mathcal{L}_2^k(\mathfrak{C}^{\prime})$ for some set selections $p,k$. By Thm.\ref{thm:express} we know that there will always exist a corresponding iVGAE $\mathcal{V}$ that is $\mathcal{L}_2$-consistent with $\mathfrak{C}$ but since it is consistent with $\mathfrak{C}$ and not $\mathfrak{C}^{\prime}$ it follows the equivalent properties $\mathcal{L}_1^p(\mathcal{V})=\mathcal{L}_1^p(\mathfrak{C}^{\prime})$ and $\mathcal{L}_2^k(\mathcal{V})=\mathcal{L}_2^k(\mathfrak{C}^{\prime})$, which means that the layer 2 also collapses for the iVGAE model. The analogue argument holds in reverse when layer 2 does not SCM-collapse to layer 1 relative to $\mathfrak{C}^{\prime}$. Since both directions together suggest an equivalence on the way collapse occurs for both SCM and iVGAE, Fact 1 from \citep{xia2021causal} (or Thm.1 from \citep{bareinboim20201on}) holds.
\end{proof}
The iVGAE model is capable of causal inference but it cannot act as a complete reparamterization of an SCM since it lacks the same amount of model description, that is, an SCM models structural equations for each of the variables while iVGAE expresses a causal probabilistic model paramterized by two functions. A consequence of Thm.\ref{thm:express} is thus that the general layer of counterfactuals ($\mathcal{L}_3$) cannot be inferred using the description in Def.\ref{def:iVGAE}. While it is easy to see why this is the case, since iVGAE deploys a single model, it is important to consider Def.2 from \citep{bareinboim20201on} to proceed with the proof.
\begin{defin}\label{def:eliasl3val}
\textbf{($\mathcal{L}_3$ Valuations.)}
Let $\mathfrak{C}$ be an SCM, then for instantiations $\mathbf{x},\mathbf{y}$ of the node sets $\mathbf{X},\mathbf{Y},\mathbf{Z},\mathbf{W}\dots \subseteq \mathbf{V}$ where $\mathbf{Y}_{\mathbf{x}}: \mathbf{U}\mapsto \mathbf{Y}$ denotes the value of $\mathbf{Y}$ under intervention ${\mathbf{x}}$, a counterfactual distribution is given by
\begin{equation}
    p^{\mathfrak{C}}(\mathbf{y}_{\mathbf{x}},\mathbf{z}_{\mathbf{w}}, \dots) = \sum_{u\in\mathcal{U}} p(\mathbf{u})
\end{equation} with $\mathcal{U}=\{\mathbf{u}\mid \mathbf{Y}_{\mathbf{x}}(\mathbf{u})=\mathbf{y}, \mathbf{Z}_{\mathbf{w}}(\mathbf{u})=\mathbf{z}, \dots \}$ being all noise instantiations consistent with $\mathbf{Y}_{\mathbf{x}},\mathbf{Z}_{\mathbf{w}},\dots$ representing different "worlds".
\end{defin}
With Def.\ref{def:eliasl3val} the simplicity of the proof for Cor.\ref{cor:l3} becomes evident, since a single iVGAE is not capable of representing the different (counter-factual) "worlds" implied by the different interventions $p(\mathbf{V}_i{\mid}\doop(\mathbf{V}_j)) = \mathbf{V}_{i,\mathbf{v}_j}$.
\begin{corollary}\label{cor:l3}
\textbf{(iVGAE Limitation.)} For any SCM $\mathfrak{C}$ there exists no iVGAE $\mathcal{V}$ such that $\mathcal{V}$ is $\mathcal{L}_3$-consistent w.r.t.\ $\mathfrak{C}$.
\end{corollary}
\begin{proof}
For an iVGAE model $\mathcal{V}$ to be $\mathcal{L}_3$-consistent with an SCM requires $\mathcal{V}$ to have the capability of inducing distributions from that layer $p^{\mathcal{V}}=p\sim\mathcal{L}_3$. Def.\ref{def:eliasl3val} suggests that a counterfactual involves the instantiations of multiple worlds for any SCM $\mathfrak{C}$, $p(\mathbf{V}_i{\mid}\doop(\mathbf{V}_j))$ for different tuples $(i,j)$. While $\mathcal{V}$ is capable of modelling each of the $\omega_{ij}=p(\mathbf{V}_i{\mid}\doop(\mathbf{V}_j))$ jointly, it is not capable of accumulating the probability masses $p(\mathbf{u})$ for which $\mathbf{u}$ is consistent with $\cup \omega_{ij}$ since it is a single model that directly estimates the l.h.s.\ of Eq.\ref{eq:l12val} opposed to an algorithmic procedure of checking over all possible instances $\mathbf{u}$ for consistency.
\end{proof}

\subsection{Proof for Corollary \ref{cor:l2rep}}
While Thm.\ref{thm:express} points out that iVGAE are causally expressive and that their existence is theoretically guaranteed, the following corollary is an important consequence that reduces the search space for such an iVGAE in practice significantly. That is, the induced graph $\mathfrak{G}$ of any SCM $\mathfrak{C}$ can be inducted into the GNN layers of the iVGAE such that $\mathcal{L}_2$-consistency is still warranted.
\begin{corollary}\label{cor:l2rep}
\textbf{($\mathcal{L}_2$ Representation.)} For any SCM $\mathfrak{C}$ that induces a graph $\mathfrak{G}$, there exists a corresponding iVGAE $\mathcal{V}{=}(q,p)$ where $q,p$ are $\mathfrak{G}$-GNN layers such that $\mathcal{V}$ is $\mathcal{L}_2$-consistent with $\mathfrak{C}$.
%\mz{\textbf{$L_2$ Representation.} State that there is always a SCM-GNN based VGAE construction that is $\mathcal{L}_i$-consistent.}
\end{corollary}
\begin{proof}
We repeat the arguments of the proof in Thm.\ref{thm:express} with the slight modification that $\mathcal{V}:=\mathcal{V}^{\mathfrak{G}}$, indicating the $\mathfrak{G}$-GNN layers $p,q$, where $\mathfrak{G}$ is the SCM $\mathfrak{C}$ induced graph.
\end{proof}

\subsection{Proof for Theorem \ref{thm:dualid}}
While expressivity is an important property of any model for inferential processes in science, identifiability stands at the core of causal inference. That is, using partial information on higher levels of the PCH (or of the SCM) to perform inferences starting from the lower levels. E.g., inferring the causal effect of $X$ on $Y$ denoted by $p(Y\mid \doop(X))$ using only observational data from $p(X,Y)$ requires the identification of an estimand in the purely causal setting. If the graph is given by $X\rightarrow Y$, then the identification is given by the conditional $p(Y\mid X) = p(Y\mid \doop(X))$. However, if the structure is more involved, imagine a (hidden) confounder $Z$ such that $X\leftarrow Z\rightarrow Y$, then an adjustment regarding $Z$'s influence would be interesting. Pearl's $\doop$-calculus provides an graphical-algebraic tool for performing identification \citep{pearl2009causality}. Thm.\ref{thm:dualid} establishes that these causal inferences are equivalent between the domains. In a nutshell, if one can perform an identification of inference for an SCM, one can also do it for an iVGAE model.
\begin{theorem}\label{thm:dualid}
\textbf{(Dual Identification.)} Consider the causal quantity of interest $Q=p^{\mathfrak{C}}(\mathbf{V}_i{\mid}\doop(\mathbf{V}_j))$, $\mathfrak{G}$ the true causal graph underlying SCM $\mathfrak{C}\in\Omega$ and $p(\mathbf{V})$ the observational distribution. $Q$ is neurally identifiable from iVGAE $\mathcal{V}{\in}\Upsilon$ with $\mathfrak{G}$-GNN modules iff $Q$ is identifiable from $\mathfrak{G}$ and $p(\mathbf{V})$.
\end{theorem}
\begin{proof}
The proof uses the same trick as the proof for Thm.4 in \citep{xia2021causal} to establish the duality in identification for SCM and iVGAE. Let $\Omega,\Upsilon$ be the sets of all SCMs and iVGAEs respectively. If $Q$ is not identifiable from a graph $\mathfrak{G}$ and the observational distribution $p(\mathbf{V})$ with full support $\forall \mathbf{v}: p(\mathbf{v})>0$, then there exists a pair of SCMs $\mathfrak{C},\mathfrak{C}^{\prime}$ such that they agree on $\mathcal{L}_1$ and $\mathfrak{G}$ but not on $\mathcal{L}_2$ i.e., $p^{\mathfrak{C}}(\mathbf{V}_i{\mid}\doop(\mathbf{V}_j))\neq p^{\mathfrak{C}^{\prime}}(\mathbf{V}_i{\mid}\doop(\mathbf{V}_j))$. By Cor.\ref{cor:l2rep} we know that there will always be an iVGAE model $\mathcal{V}^{\mathfrak{G}}$ based on $\mathfrak{G}$-GNN layers that is $\mathcal{L}_2$-consistent with $\mathfrak{C}$. This is all, since, $p^{\mathcal{V}^{\mathfrak{G}}}(\mathbf{V}_i{\mid}\doop(\mathbf{V}_j))= p^{\mathfrak{C}}(\mathbf{V}_i{\mid}\doop(\mathbf{V}_j))\neq p^{\mathfrak{C}^{\prime}}(\mathbf{V}_i{\mid}\doop(\mathbf{V}_j))$ which suggests that if $Q$ is not identifiable by $\mathfrak{G}$ and $p(\mathbf{V})$, then it is also not neurally identifiable through $\mathcal{V}^{\mathfrak{G}}$. Again, as in the proof for Cor.\ref{cor:cht}, the reverse direction in which we assume $Q$ to be identifiable analogously holds true for the same reason in this scenario.
\end{proof}

\section{Further Experimental Insights and Details}
The following subsections provide an extensive empirical analysis of causal effect estimation (\textbf{B.1}) and properties of the density estimation (\textbf{B.2}). 

\noindent{\bf Code.} We make our code publically available at \url{https://anonymous.4open.science/r/Relating-Graph-Neural-Networks-to-Structural-Causal-Models-A8EE}.

\subsection{Causal Effect Estimation}
Our experiments in the following investigate causal inference, namely causal effect estimation. We are interested in the average causal (or treatment) effect defined as $\text{ATE}(X,Y):=\mathbb{E}[Y{\mid}\doop(X{=}1)]-\mathbb{E}[Y{\mid}\doop(X{=}0)]$ \citep{pearl2009causality,peters2017elements}, where the binary variable $X$ is being referred to as treatment (and $Y$ is simply the effect e.g.\ recovery). Note that we can extend the ATE to be categorical/continuous, however, we focus on binary structures in the following, thereby the mentioned formulation is sufficient. Also, note that ATE is a special case of density estimation in which the same intervention location $X$ is being queried for the different intervention parameterizations $\doop(X=x)$, for binary variables this amounts to $\doop(X=i),i\in\{0,1\}$. General properties of the density estimation for the iVGAE are being systematically investigated in later sections of the Appendix (\textbf{B.2}, Figs.\ref{fig:invest-abc},\ref{fig:invest-def}). Nonetheless, it is an important sub-category since usually one is interested in causal inference rather than minuscule approximation of all derivable probability densities. In the following we do not consider identification, since the iVGAE is a data-dependent model (Def.\ref{def:dVGAE},Thm.\ref{thm:dualid}) it can not act as a proper SCM like NCM can (which are simply a neural net based reparameterization of the SCM formulation, and thus by construction capable of identification). In the following, Fig.\ref{fig:Density}(b) from the main paper will act as guiding reference to the subsequent analysis.

\paragraph{Considered SCM Structures.} We consider three different structural causal models structures (1: chain, 2: confounder, 3: backdoor) that are of significantly different nature in terms of information flow as dictated by the $d$-separation criterion \citep{koller2009probabilistic, pearl2009causality}. Fig.\ref{fig:Density}(b) bottom portraits the different structure (note the re-ordering of the variables, the graphs are being drawn in a planar manner). In the following we provide the exact parametric form for each of the SCMs $\mathcal{M}_i,i\in\{1..3\}$, the chain is given by
\begin{align}
\mathcal{M}_1 &= 
\left\{\begin{aligned}\quad
X & \leftarrow & f_X(U_X) = & U_X\\
Y & \leftarrow & f_Y(X, U_Y) = & X\wedge U_Y\\
Z & \leftarrow & f_Z(Y, U_Z) = & Y\wedge U_Z\\
W & \leftarrow & f_W(Z, U_W) = & Z\wedge U_W,\\
     \end{aligned}\right.
\end{align} the confounded structure is given by
\begin{align}
\mathcal{M}_2 &= 
\left\{\begin{aligned}\quad
X & \leftarrow & f_X(Z, U_X) = & Z\oplus U_X\\
Y & \leftarrow & f_Y(X,Z, U_Y) = & (X\wedge U_Y)\oplus (Z\wedge U_Y)\\
Z & \leftarrow & f_Z(U_Z) = & U_Z\\
W & \leftarrow & f_W(X, U_W) = & X\wedge U_W,\\
     \end{aligned}\right.
\end{align} and finally the backdoor structure is given by
\begin{align}
\mathcal{M}_3 &= 
\left\{\begin{aligned}\quad
X & \leftarrow & f_X(Z, U_X) = & Z \oplus U_X\\
Y & \leftarrow & f_Y(W, X, U_Y) = & X\wedge (W\wedge U_Y)\\
Z & \leftarrow & f_Z(U_Z) = & U_Z\\
W & \leftarrow & f_W(Z, U_W) = & Z\wedge U_W,\\
     \end{aligned}\right.
\end{align} where $\oplus$ denotes the logical XOR operation. We refer to logical operations to assert that the variables remain within $\{0,1\}$. Note that we consider Markovian SCM, thus the exogenous/noise terms are independent. We choose Bernoulli $\mathcal{B}(p),p\in [ 0, 1]$ distributions for the noise terms. We choose the $p_i$ for each of the terms $U_i$ uniformly at random to generate 5 different parameterizations of the same structure. For each intervention we create a data set of size 10000 and train a model consisting of two iVGAE modules. We consider three random seeds for each of the three parameterizations for each of the three structures, $3^3=27$ distinct optimizations. In the following we always consider the ATE of $X$ on $Y$, that is $Q=\text{ATE}(X,Y)$, which can be positive/negative $Q\neq 0$ or neutral $Q=0$ if their is neither a direct nor indirect influence from $X$ to $Y$. All ATE estimates we observed were approaching zero thus reasonable (more specifically, bounded in $[0,0.2]$ whereas the maximum error is $|\text{ATE}^*-\hat{\text{ATE}}|=|1-(-1)|=2$ i.e., the worst-case single approximation was off by $10\%$).

\paragraph{Interpretation for ATE Estimation on the chain structure $\mathcal{M}_1$.} The causal effect of $X$ on $Y$ is both direct and unconfounded. It is arguably the easiest structure to optimize, and the iVGAE correctly performs (see top/green row in Fig.\ref{fig:Density}(b)). The variance is further reduced in comparison, arguably due the relatively low variance in ground truth ATEs (since $\mathcal{M}_1$ ATE are mostly positive).

\paragraph{Interpretation for ATE Estimation on the confounder structure $\mathcal{M}_2$.} The causal effect of $X$ on $Y$ is direct yet confounded via $Z$. The ATE can thus obtain, given parameterizations, positive, negative and the zero value. The iVGAE reacts accordingly and is able to adequately estimate the causal effect. This is the key observation to causal inference since a correlation-based model would fail to return the correct answer i.e., it would simply return $p(Y{\mid}X)$ instead of $p(Y{\mid}\doop(X))$.

\paragraph{Interpretation for ATE Estimation on the backdoor structure $\mathcal{M}_3$.} The most difficult case since the causal effect of $X$ on $Y$ is confounded through a backdoor path $X\leftarrow Z \dots Y$. Nonetheless, the iVGAE adequately models the causal effect. The variance, like for $\mathcal{M}_2$, is increased which is arguably due to the spectrum of available ATE instantiations dependent on the concrete parameterization of the SCM.

\paragraph{Numerical Report.} In Tab.\ref{tab:identnumbers} we show numerical statistics on the trained models applied to the different SCM structures $\mathcal{M}_i$ for one of the three parameterizations averaged across seven random seeds. We show the two interventions on $X$ for computing the ATE, and report mean, best and worst ELBO (Def.\ref{eq:elbo}) and likelihood $\log p(x)$ performances (the higher the better). As expected, ELBO lower bounds the marginal log-likelihood and the validation performance, as desired, corresponds with the test performance.

\subsection{Systematic Investigation on Density Estimation}
We perform multiple experiments to answer various interesting questions. The following list enumerates all the key \textbf{questions} to be highlighted and discussed in this section:
\begin{enumerate}[(a)]
    \item What aspects of an interventional change through $\doop(\cdot)$ does the method capture?
    \item How does variance in ELBO \ref{eq:elbo} during variational optimization affect the method?
    \item When and how does the method fail to capture interventional distributions?
    \item At what degree does the performance of the method vary for different training durations?
    \item How does the method scale w.r.t.\ interventions while keeping capacity constant?
    \item How important is parameter tuning?
\end{enumerate}
For all the subsequent experiments we considered the same architecture. That is, an iVGAE (Def.\ref{def:iVGAE}) model consisting of two interventional GNNs (Def.\ref{def:iGNN}) for the encoder and decoder respectively where each GNN consists of 2 Sum-Pool Layers as introduced by \citep{kipf2016semi}. The decoder has $2B^2$ parameters, whereas the encoder has $3B^2$ parameters, where $B$ is the batch size, to allow for modelling the variance of the latent distribution. We consider datasets of size 10000 per intervention. The interventions are collected by modifying the data generating process of the data sets. For simplicity, we mostly consider uniform interventions. Optimization is done with RMSProp \citep{hinton2012neural} and the learning rate is set to 0.001 throughout. We perform a mean-field variational approximation using a Gaussian latent distribution and a Bernoulli distribution on the output. All data sets we have considered are binary but extensions to categorical or continous domains follow naturally. In the following, we focus on ASIA introduced in \citep{lauritzen1988local}, and Earthquake/Cancer covered within \citep{korb2010bayesian} respectively. We employ a training, validation and test set and use the validation set to optimize performance subsequently evaluated on the test set. We use a 80/10/10 split. We use 50 samples per importance sampling procedure to account for reproducibility in the estimated probabilities. Training is performed in 6000 base steps where each step considers batches of size $B$ that are being scaled multiplicatively with the number of interventional distributions to be learned. The adjacency provided to the GNNs is a directed acyclic graph (DAG) summed together with the identity matrix to allow for self-reference during the computation. The densities are acquired using an adjusted application of Alg.\ref{alg:inference}. All subsequent experiments are being performed on a MacBook Pro (13-inch, 2020, Four Thunderbolt 3 ports) laptop running a 2,3 GHz Quad-Core Intel Core i7 CPU with a 16 GB 3733 MHz LPDDR4X RAM on time scales ranging from a few minutes up to approximately an hour with increasing size of the experiments. In the following, Figs.\ref{fig:invest-abc} and \ref{fig:invest-def} act as reference for the subsequent subsections' elaborations on the questions (a)-(f). Numerical statistics are provided in Tab.\ref{tab:investnumbers}. For reproducibility and when reporting aggregated values (e.g. mean/median) we consider 10 random seeds. Our code is available at \url{https://anonymous.4open.science/r/Relating-Graph-Neural-Networks-to-Structural-Causal-Models-A8EE}.

\paragraph{Q-(a) What aspects of an interventional change through $\doop(\cdot)$ does the method capture?}
Consider Fig.\ref{fig:invest-abc}(a) in the following. It shows an iVGAE model trained on the observational ($\mathcal{L}_1$) and one interventional ($\mathcal{L}_2$, intervention $\doop(\text{tub}=\mathcal{B}(\frac{1}{2}))$) distributions, where the former is shown on the left and the latter on the right. We can observe that both the change within the intervention location (tub) but also in the subsequent change propagations along the causal sequence (either, xray, dysp) are being captured. In fact, they are being not only detected but also adequately modelled for this specific instance. If the optimization is successful in fitting the available data with the available model capacity, then this is the general observation we make across all the other settings we have evaluated i.e., the model can pick-up on the interventional change without restrictions.

\paragraph{Q-(b) How does variance in ELBO \ref{eq:elbo} during variational optimization affect the method?}
Consider Fig.\ref{fig:invest-abc}(b) in the following. Two different random seeds (that is, different initializations and thus optimization trajectories) for the same iVGAE under same settings (data, training time, etc.) are being shown. Clearly, the optimization for the seed illustrated on the left was successful in that the quantities of interest are being adequately estimated. However, the random seed shown on the right overestimates several variables (tub, either, xray) and simply does not fit as well. We argue that this is a general property of the variational method and ELBO (Eq.\ref{eq:elbo}) i.e., the optimization objective is non-convex and only a local optimum is guaranteed. Put differently, the variance in performance amongst random seeds (as measured by ELBO) is high.

\paragraph{Q-(c) When and how does the method fail to capture interventional distributions?}
Consider Fig.\ref{fig:invest-abc}(c) in the following. The predicted marginals of a single iVGAE model on the Earthquake dataset \citep{korb2010bayesian} are being presented for the observational density (right) and the interventional $\doop(\text{Earthquake}=\mathcal{B}(\frac{1}{2}))$ (left). The underlying graph in this real-world inspired data set is given by
\begin{align}
\begin{split}
    G = &(\{B,E,A,M,J\}, \\ &\{(B\rightarrow A), (E\rightarrow A), (A\rightarrow \{M,J\})\})
\end{split}
\end{align} where $B,E,A,M,J$ are Burglary, Earthquake, Alarm, MaryCalls and JohnCalls respectively. From $G$ we can deduce that the mutilated graph $G_I$ that is generated by the aforementioned Bernoulli-intervention $I=\doop(E=\mathcal{B}(\frac{1}{2}))$ will in fact be identical $G=G_I$. Put differently, conditioning and intervening are identical in this setting. The formulation for performing interventions in GNN (Def.\ref{def:iGNN}) only provides structural information i.e., information about the intervention location (and thus occurence) \textit{but not about the content of the intervention}. While this generality is beneficial in terms of assumptions placed onto the model, it also restricts the model in this special case where associational and internvetional distributions coincide. In a nutshell, computationally, the two posed queries $I_1=I$ and $I_2=\doop(\emptyset)$ are identical in this specific setting ($I_1=I_2$) and this is also being confirmed by the empirical result in Fig.\ref{fig:invest-abc}(c) i.e., the predictions are the same across all settings as follows naturally from the formulation in Def.\ref{def:iGNN} which in this case is a drawback. Generally, this insight needs to be considered a drawback of formulation Def.\ref{def:iGNN} opposed to being a  failure mode since the formulation indeed behaves as expected. In all our experiments, actual failure in capturing the densities seems to occur only in low model-capacity regimes, with early-stoppage or due to numerical instability.

\paragraph{Q-(d) At what degree does the performance of the method vary for different training durations?}
Consider Fig.\ref{fig:invest-def}(d) in the following. It shows the same model being probed for its predictions of the observational distributions at different time points, left early and right later (at convergence). Following intuition and expectation, training time does increase the performance of the model fit. Consider nodes tub and lung which were both underestimating in the earlier iterations while being perfectly fit upon convergence.

\paragraph{Q-(e) How does the method's scaling towards many interventions behave when keeping capacity constant?}
Consider Fig.\ref{fig:invest-def}(e) in the following. We show the same iVGAE model configurations being trained on either 2 interventional (top row) or 4 interventional distributions from the Earthquake dataset \citep{korb2010bayesian}. I.e., we keep the model capacity and the experimental settings consistent while increasing the difficulty of the learning/optimization problem by providing double the amount of distributions. As expected, we clearly see a degeneration in the quality of density estimation. The iVGAE model trained on 2 distributions adequately estimates the Bernoulli-interventional distribution $\doop(A=\mathcal{B}(\frac{1}{2}))$ (top right) while the model trained on more distributions (lower right) fails.

\paragraph{Q-(f) How important is parameter tuning?}
Consider Fig.\ref{fig:invest-def}(f) in the following. It shows an iVGAE model before (left) and (after) parameter tuning on the Bernoulli-intervention $\doop(\text{tub}=\mathcal{B}(\frac{1}{2}))$ on the ASIA dataset \citep{lauritzen1988local} where the tuned parameters involve aspects like pooling (sum, mean), layer numbers, learning rate, batch size etc. We clearly see a an improvement towards a perfect fit for certain nodes (smoke, tub, either). As expected, parameter tuning, as for any other machine learning model, is essential for improving the predictive performance. Especially, for universal density approximation, as discussed in Thm.\ref{thm:uda}, this is crucial - since in principle any density can be approximated with sufficient capacity and thus the dependence relies on the model itself but also on the optimization for optimality in practice.

\paragraph{Numerical Report.} In Tab.\ref{tab:investnumbers} we show numerical statistics on the trained models applied to the different data sets for answering the investigated questions (a)-(f). For reproducibility and stability, we performed 10 random seed variants per run. NaN values might occur for a single seed due to numerical instability in training, thus invalidating the whole run. We show the performance on different, increasing interventional data sets at various training iterations. We report mean, best and worst ELBO (Def.\ref{eq:elbo}) and likelihood $\log p(x)$ performances (the higher the better). Question (b) regarding the variance of ELBO becomes evident when considering the best-worst gaps. As expected, ELBO lower bounds the marginal log-likelihood. Also, by providing more distributions to learn, thus increasing difficulty, the quality of the fits in terms of ELBO/likelihood degenerates which is inline with what we observed regarding question (e). Finally, we might also note that the validation performance, as desired, corresponds with the test performance.

\paragraph{Full-width Figures and Tables.} Following this page are the figures and tables (Fig.\ref{fig:invest-abc}, \ref{fig:invest-def}, Tabs.\ref{tab:identnumbers}, \ref{tab:investnumbers}) that were referenced in the corresponding sections of the appendix.

\clearpage
\begin{figure*}[!ht]
\centering
\includegraphics[width=0.9\textwidth]{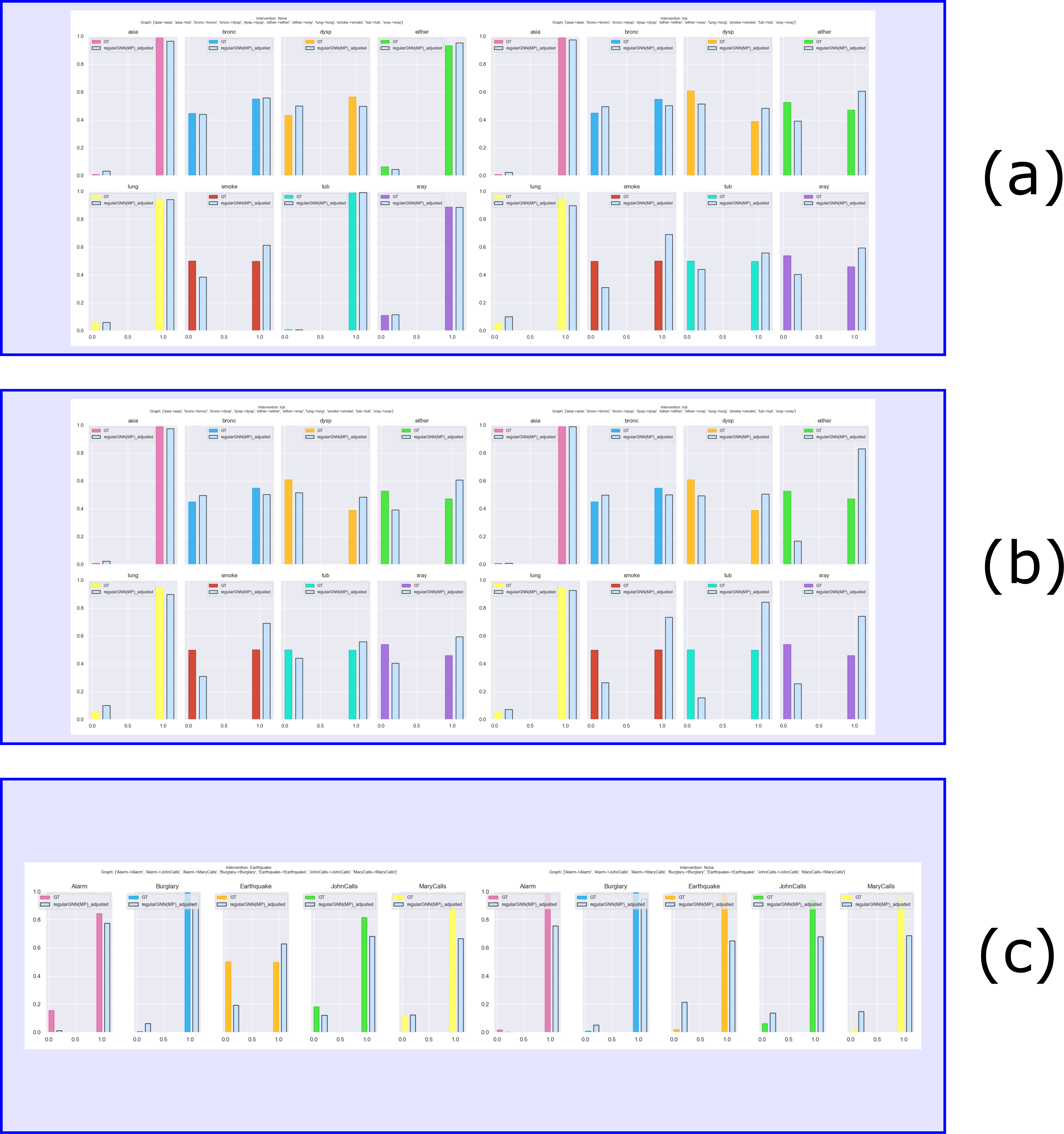}
\caption{\textbf{Systematic Investigation: Questions (a), (b), and (c).} The questions being answered by the presented illustrations in the respective blue box are (a) What aspects of an interventional change through $\doop(\cdot)$ does the method capture? (b) How does variance in ELBO \ref{eq:elbo} during variational optimization affect the method? and (c) When and how does the method fail to capture interventional distributions? Please consider the elaboration within the corresponding text segment. (Best viewed in color.)}
\label{fig:invest-abc}
\end{figure*}

\begin{figure*}[!ht]
\centering
\includegraphics[width=0.9\textwidth]{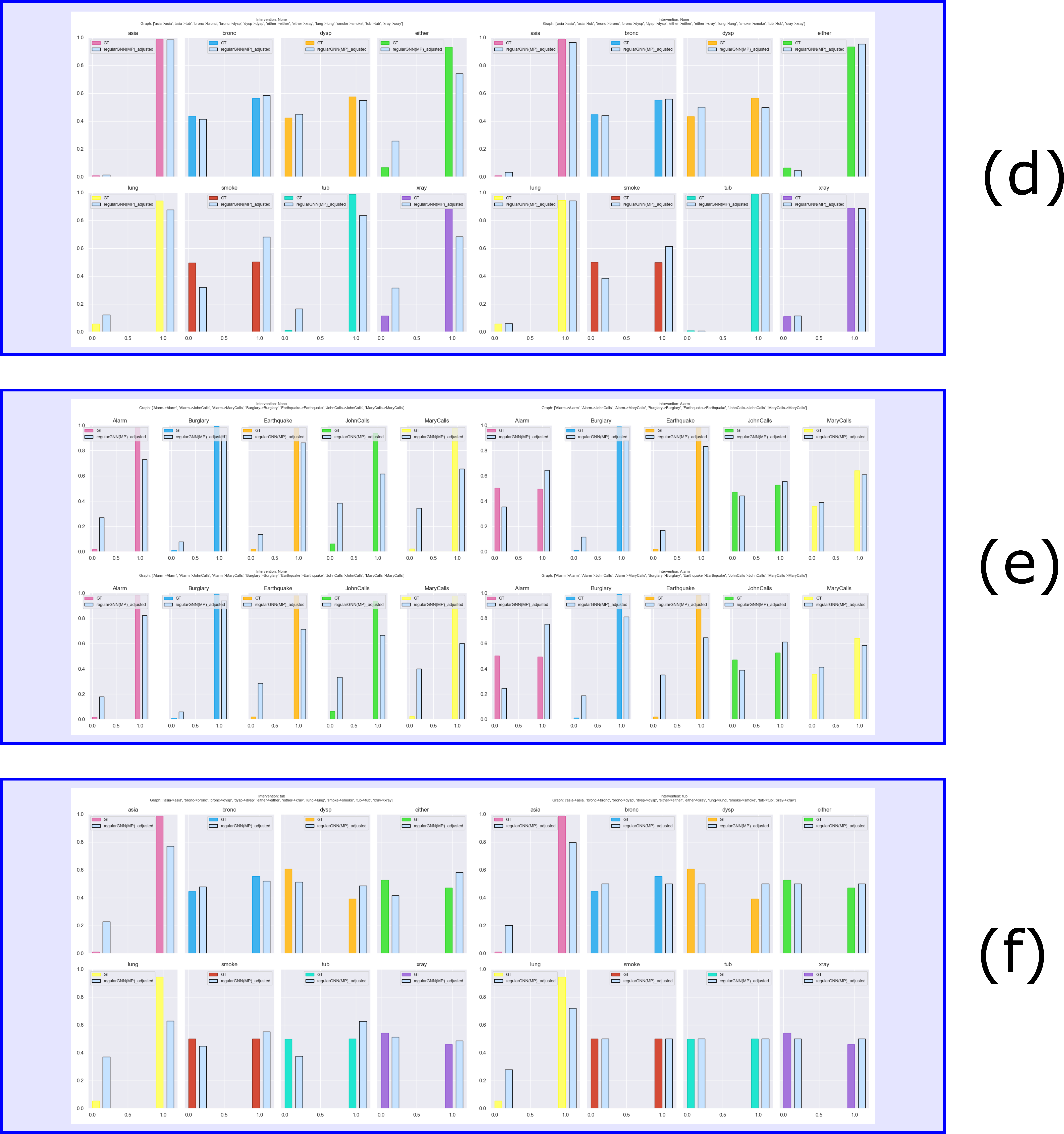}
\caption{\textbf{Systematic Investigation: Questions (d), (e), and (f).} The questions being answered by the presented illustrations in the respective blue box are (d) At what degree does the performance of the method vary for different training durations? (e) How does the method's scaling towards many interventions behave when keeping capacity constant? and (f) How important is parameter tuning? Please consider the elaboration within the corresponding text segment. (Best viewed in color.)}
\label{fig:invest-def}
\end{figure*}

\begin{table}[!t]
\begin{adjustbox}{max width=\textwidth}
\begin{tabular}{||c c c c c c c c c c||} 
 \hline
 $\mathfrak{C}_i$ & $\doop(X=x)$ & Steps & \thead{Mean \\ Train ELBO} &  \thead{Mean \\ Valid ELBO} &  \thead{Mean \\ Test ELBO} &  \thead{Mean \\ Valid $\log p(x)$} & \thead{Mean \\ Test $\log p(x)$} & \thead{Best \\ Test ELBO}  & \thead{Worst \\ Test ELBO} \\ [0.5ex] 
 \hline\hline
 1 & 1 & 2.5k & -1.48&       -1.60&    -1.59&          -1.43& -1.43 & -1.50 & -1.86\\
 1 & 0 & 2.5k & -1.86&         -2.12&       -2.07&            -1.85& -1.84&-1.70 & -2.47\\
 \hline
 2 & 1 & 2.5k &-1.48&      -1.60&    -1.59&        -1.43& -1.43& -1.37 & -1.86\\
 2 & 0 & 2.5k & -1.86&      -2.12&    -2.07&         -1.85& -1.84& -1.91 & -2.47\\
 \hline
 3 & 1 & 2.5k & -1.48&    -1.60&     -1.59&       -1.43& -1.43& -1.37 & -1.86\\
 3 & 0 & 2.5k &  -1.86&       -2.12&      -2.07&          -1.85  &-1.84& -1.91 & -2.47\\
 \hline
\end{tabular}
\end{adjustbox}
\caption{\textbf{Causal Effect Estimation: Key Statistics.} The aggregations cover three random seeds per model.}
\label{tab:identnumbers}
\end{table}

\begin{table*}[!t]
\begin{adjustbox}{max width=\textwidth}
\begin{tabular}{||c c c c c c c c c c||} 
 \hline
 Dataset & $|\mathcal{L}_2|$ & Steps & \thead{Mean \\ Train ELBO} &  \thead{Mean \\ Valid ELBO} &  \thead{Mean \\ Test ELBO} &  \thead{Mean \\ Valid $\log p(x)$} & \thead{Mean \\ Test $\log p(x)$} & \thead{Best \\ Test ELBO}  & \thead{Worst \\ Test ELBO} \\ [0.5ex] 
 \hline\hline
 ASIA & 2 & 16k & -3.43 & -6.02 & -4.60 & -4.15 & -4.11 & -4.10 & -5.37\\ 
 \hline
 ASIA & 4 & 16k & NaN  &            NaN  &     -4.61            &      NaN  & -4.05 & -3.79 & -5.59 \\
 \hline
 Cancer & 2 & 12k & -2.26&  -4.76&       -3.17&          -3.66& -2.76& -2.35 & -4.49
 \\
 \hline
 Cancer & 4 & 12k &  NaN    &          NaN &      -3.26&               NaN&  -2.88& -2.43 & -4.53
 \\
 \hline
 Earthquake & 2 & 12k & -1.21&     -3.02&      -2.43&         -1.92& -1.93 & -1.49 & -3.50\\
 \hline
 Earthquake & 4 & 12k & -0.78&      -4.67&       -2.77&       -2.31& -2.27 & -1.75 & -3.46 \\  
 \hline
\end{tabular}
\end{adjustbox}
\caption{\textbf{Density Estimation: Key Statistics.} The aggregations cover 10 random seeds for each of the models respectively.}
\label{tab:investnumbers}
\end{table*}

\end{document}